\newcolumntype{L}[1]{>{\raggedright\let\newline\\\arraybackslash\hspace{0pt}}m{#1}}
\newcolumntype{C}[1]{>{\centering\let\newline\\\arraybackslash\hspace{0pt}}m{#1}}
\newcolumntype{R}[1]{>{\raggedleft\let\newline\\\arraybackslash\hspace{0pt}}m{#1}}
\newcolumntype{x}[1]{>{\centering\let\newline\\\arraybackslash\hspace{0pt}}p{#1}}
\DeclareMathOperator*{\argmax}{arg\,max}
\newcommand{\inputspace}{\mathcal{X}}
\newcommand{\explainer}{E}
\newcommand{\expdistnfull}{\mu_{(E, F, X)}}
\newcommand{\explanationSpace}{\mathcal{E}}
\newcommand{\distnset}{\mathcal{P}(\explanationSpace)}
\newcommand{\centermu}{\eta_{\mathcal{E}\#}\mu}
\newcommand{\centermuclean}{\bar \mu}
\newcommand{\refdist}{U_{\mathcal{E}}}
\newcommand{\refdistemp}{U_{\mathcal{E}}^{(N)}}
\newcommand{\reals}{\mathbb{R}}
\newcommand{\abbr}{WG}
\newcommand\inv[1]{#1\raisebox{1.15ex}{$\scriptscriptstyle-\!1$}}
\theoremstyle{plain}
\newtheorem{theorem}{Theorem}
\newtheorem{lemma}[theorem]{Lemma}
\theoremstyle{definition}
\newtheorem{definition}[theorem]{Definition}
\theoremstyle{remark}
\begin{document}

% If your paper is accepted and the title of your paper is very long,
% the style will print as headings an error message. Use the following
% command to supply a shorter title of your paper so that it can be
% used as headings.
%
%\runningtitle{I use this title instead because the last one was very long}

% If your paper is accepted and the number of authors is large, the
% style will print as headings an error message. Use the following
% command to supply a shorter version of the authors names so that
% they can be used as headings (for example, use only the surnames)
%
\runningauthor{Davin Hill$^*$, Josh Bone$^*$, Aria Masoomi, Max Torop, Jennifer Dy}

\twocolumn[

\aistatstitle{Axiomatic Explainer Globalness via Optimal Transport}

\aistatsauthor{ Davin Hill$^*$ \And Josh Bone$^*$ \And Aria Masoomi}
\aistatsaddress{Northeastern University \\ dhill@ece.neu.edu \And Northeastern University \\  bone.j@northeastern.edu \And Northeastern University \\ masoomi.a@northeastern.edu}

\aistatsauthor{ Max Torop \And Jennifer Dy}
\aistatsaddress{Northeastern University \\ torop.m@northeastern.edu \And Northeastern University \\ jdy@ece.neu.edu} ]

\begin{abstract}
Explainability methods are often challenging to evaluate and compare.
With a multitude of explainers available, practitioners must often compare and select explainers based on quantitative evaluation metrics.
One particular differentiator between explainers is the diversity of explanations for a given dataset; i.e. whether all explanations are identical, unique and uniformly distributed, or somewhere between these two extremes.
In this work, we define a complexity measure for explainers, \emph{globalness}, which enables deeper understanding of the distribution of explanations produced by feature attribution and feature selection methods for a given dataset.
We establish the axiomatic properties that any such measure should possess and prove that our proposed measure, \emph{Wasserstein Globalness}, meets these criteria.
We validate the utility of Wasserstein Globalness using image, tabular, and synthetic datasets, empirically showing that it both facilitates meaningful comparison between explainers and improves the selection process for explainability methods.
\end{abstract}
\section{INTRODUCTION}
\vspace{-2mm}

Machine Learning models have become increasingly prevalent across various domains \citep{abiodun, lecun2015}, however, this increased performance is often associated with higher levels of complexity and opaqueness.
The sub-field of post-hoc Explainable AI (XAI) methods has emerged to address the challenge by providing explanations corresponding to black-box model predictions, clarifying the reasoning behind them.
In this work we focus on feature-based methods, or \emph{explainers}, that either provide real-valued (feature attribution) or binary (feature selection) scores as an \emph{explanation} for a given data sample.

A wide variety of explainers have been introduced in the recent years, prompting a need for quantitative metrics to compare explainer performance.
A challenge in evaluating explainers is that there is typically no ``ground truth'' for calculating traditional performance metrics, such as accuracy.
Instead, recent works have proposed analogous metrics, which we refer to as \emph{faithfulness} metrics \citep{chen2022makes}, that quantify the ability of generated explanations to reflect the predictive behavior of the black-box models (e.g. \citet{yeh2019fidelity,ROAR,ROAD}).

\begin{figure*}[t]
  \centering
  \includegraphics[width=0.95\linewidth]{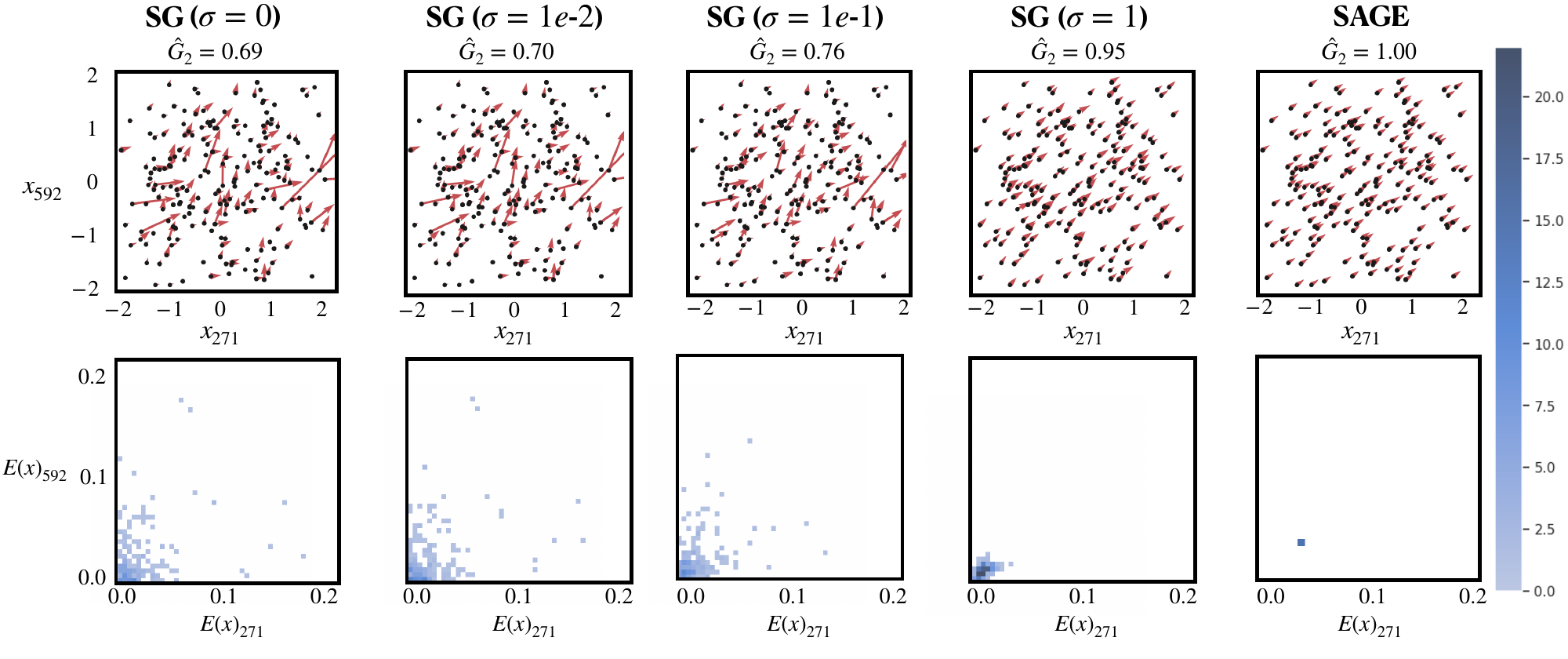}
  \vspace{-5mm}
  \caption{
  Different explainers and hyperparameters can exhibit different levels of \emph{globalness} for the same black-box model and dataset. We plot SmoothGrad (SG; $\sigma$ indicates smoothing, see Eq. \ref{eq:smoothgrad}) and SAGE explanations for CIFAR10, projected to 2-d for visualization.
  \textbf{Top:} Samples are plotted in black; attributions are shown as red vectors originating from their respective sample.
  \textbf{Bottom:} 2-d histograms of the distribution of explanations.
  As $\sigma$ increases for SG, the histograms condense towards the dirac, indicating higher globalness.
  SAGE produces the same explanation for every sample.
  Wasserstein Globalness ($\hat{G}_2$), denoted at the top of each column, captures this spectrum of globalness. 
  }
  \label{fig:example1}
  % \vspace{-4mm}
\end{figure*}

However, faithfulness metrics alone are insufficient for comparing explainers.
In a practical scenario, a user may be given a trained prediction model and dataset, and is deciding among a set of explainers to use.
While faithfulness performance is a critical deciding factor, there are often cases where different explainers, or different sets of explainer hyperparameters, yield similar faithfulness scores.
This is analagous to a model selection problem: when two models achieve similar accuracy, the simpler model is generally preferred \citep{vapnik2013nature}.
Therefore, when given explainers with \emph{similar faithfulness scores}, it is important to also consider explainer complexity, and, all else being equal, select the explainer with lowest complexity.

In this work we investigate an explainer complexity metric based on the diversity of explanations generated over a given dataset and black-box model.
In particular, if we consider explainers as function approximators of black-box models \citep{han2022explanation, hill2024boundary}, then explainers that produce more diverse explanations for the same dataset are inherently more complex function approximators. 
We refer to this complexity metric as \emph{globalness}, so-named because a global explainer (i.e. an explainer that generates the same explanations over the entire dataset) represents a natural upper bound to a globalness metric.
% Such a metric allows us to compare similarly faithful explainers on the basis of their globalness.
Intuitively, an explainer with a higher globalness score generates fewer distinct explanations, identifies more globally-invariant features, and/or produces more similar explanations (Figure \ref{fig:example1}). 
Therefore, each explanation from a high-globalness explainer has greater generalizability in terms of the number of data samples it ``explains''.
A similar concept has been explored in cognitive science as a desirable property for explanations, referred to as ``breadth'' \citep{lombrozo2016explanatory} or ``degree of integration'' \citep{ylikoski2010dissecting}.

To formally define this globalness measure, we first introduce six desirable properties that such a metric should satisfy. Building on these axioms, we propose a novel method, Wasserstein Globalness (\abbr{}), which quantifies explainer globalness by evaluating the diversity of explanations for a given dataset and black-box model.
\abbr{} is a continuous, flexible metric that adheres to the desired axiomatic properties, and is applicable to both discrete and continuous explanations.
Notably, \abbr{} also enables users to choose a distance metric for explanations, allowing customization of how similarity is measured and specifying the transformations which \abbr{} remains invariant to.
To the best of our knowledge, we are the first to formalize a globalness measure for comparing explainers.
%=======================================================================
In summary, our main contributions are:
\setlist{nolistsep}
\begin{itemize} [leftmargin=0.5cm]
    \item  We motivate the need for a measure of explanation globalness and introduce axiomatic properties which any reasonable measure should satisfy.
    \item We propose {\em Wasserstein Globalness} (\abbr{}), which we prove satisfies the aforementioned axioms.
    \item We provide theoretical results characterizing the sample complexity for \abbr{}.
    \item Experiments empirically validate the ability for \abbr{} to better compare different explainers. 
\end{itemize}

% \vspace{-4mm}
\section{RELATED WORK} \label{sec:related_work}
\vspace{-2mm}

A wide variety of methods have been introduced to explain black-box models \citep{guidottiSurveyMethodsExplaining2018, zhang_survey}.
Therefore, quantitative evaluation metrics have emerged as a key research area \citep{nauta2023anecdotal, chen2022makes}. Specifically, we focus on metrics for post-hoc feature attribution and feature selection methods, which can be categorized into \emph{faithfulness}, \emph{robustness}, and \emph{complexity} metrics.

\emph{Faithfulness} metrics evaluate how well explanations reflect the true underlying behavior of the black-box model. These methods often involve perturbing \citep{yeh2019fidelity, samek2016evaluating} or masking \citep{petsiukRISERandomizedInput2018, ROAR, ROAD, chen2018learning, shah2021input, dabkowski_gal, jethani2022fastshap, arya2019one} features with the highest (or lowest) attribution values and measuring the change in model output.
Other works focus on different aspects of \emph{robustness}, such as explanation stability \citep{bansalSAMSensitivityAttribution2020}, adversarial robustness \citep{wang2020smoothed, dombrowski_explanations_can_be_manipulated, ghorbaniInterpretationNeuralNetworks2019,rieger2020simple}, sensitivity to perturbations \citep{alvarez-melisRobustnessInterpretabilityMethods2018, zulqarnain_lipschitzness}, or sensitivity to distribution shift \citep{lakkaraju2020robust, upadhyay2021towards}.

Our work is most related to the property of \emph{complexity}.
There have been various perspectives on what constitutes low-complexity explanations. Some works quantify the spread of feature attribution values \emph{within each explanation} \citep{nguyen2020quantitative, bhattEvaluatingAggregatingFeaturebased2020, chalasani2020concise}.
\citet{samek2016evaluating} evaluate explanation complexity using the filesize of the compressed saliency maps.
\citet{nguyen2020quantitative} quantify an explanation's broadness by measuring dependency between the data distribution and an interpretable feature mapping. \citet{zhang_survey} loosely define a local, semi-local, and global taxonomy for explainers.
In contrast, we propose a complexity metric that quantifies the diversity, or spread, of \emph{a distribution of explanations} over a dataset.

The proposed method utilizes the Wasserstein metric \citep{kantorovich}, which has been applied in various ways related to explainers.
\citet{chaudhury2024explainable} uses Wasserstein distance to measure the distance between the data distribution and predictive distribution of a black-box model.
\citet{fel2022good} investigate the use of Wasserstein distance as a pairwise similarity measure between explanations.

% \vspace{-4mm}
\section{AXIOMATIC GLOBALNESS} \label{sec:axioms}
\vspace{-2mm}

To guide us with defining an appropriate measure of globalness, we will first discuss the desired properties which should be satisfied by \textit{any} measure of explanation globalness (Section \ref{sec:properties}).
In Section \ref{sec:candidates}, we briefly review why other reasonable candidates fall short, in light of our axioms. 
In Section \ref{sec:wasserstein_globalness}, we define Wasserstein Globalness, and show that it satisfies each axiom.

%%%%%%%%%%%%%%%%%%%%%%%%%%%%%%%%%%%%%%%%%%%%%%%%
% \vspace{-3mm}
\subsection{Technical Preliminaries}
\vspace{-1mm}

Consider a black-box prediction model $F : \inputspace \rightarrow \mathcal{Y}$, where $\inputspace \subseteq \reals^d$ is the space of model inputs, $\mathcal{Y} \subseteq \reals^c$ is the space of labels, $d$ is the dimension of the data, and $c$ is the number of class labels. 
An \textit{explainer} for this model is a function $\explainer(F, \cdot) : \inputspace \rightarrow \explanationSpace$, with outputs $\explainer(F, x)$, $x \in \inputspace$ which are referred to as \textit{explanations}. 
The space $\explanationSpace$ is either a subset of $\reals^s$ for feature attribution explainers, or the vertices of a hypercube, $\{0,1\}^s$, for feature selection explainers, where $s$ represents the dimension of the explanations\footnote{Note that $s$ does not necessarily equal $d$ (e.g. when using a low-dimensional mapping \citep{ribeiroLIME}).}.
We will use $\distnset$ and $\mathcal{P}(\inputspace)$ to denote the space of Borel probability measures on $\explanationSpace$ and $\inputspace$, respectively.
Given an explainer $E$, a model $F$, and data distribution $X \in \mathcal{P}(\inputspace)$, we can define $\expdistnfull \in \distnset$ as the corresponding push-forward distribution of explanations generated by applying the explainer $E$ over $X$. We assume that the support of $\expdistnfull$ is a bounded subset of $\explanationSpace$.\footnote{This is a mild assumption; in practice the user has a finite number of explanations sampled from $\expdistnfull$.}

Additionally, let $d_{\explanationSpace}$ be a distance metric between explanations; the specific distance metric $d_{\explanationSpace}$ should be selected based on the explanation space $\mathcal{E}$.
We define two options: $d_A(x,y) = ||x-y||_2$ for feature attribution and $d_S(x,y) = \sum_i \mathds{1}(x_i \neq y_i)$ for feature selection. Selection of $d_{\explanationSpace}$ is further discussed in Section \ref{sec:wasserstein_globalness}.
For any $P,Q \in \distnset$, $\Pi(P, Q)$ denotes the set of joint probability measures in $\mathcal{P}(\mathcal{E} \times \mathcal{E})$ which marginalize to $P$ and $Q$.
We denote the Dirac measure centered at $x$ as $\delta_{x}$, which takes value $1$ at $x$ and $0$ elsewhere.
For a measurable map $\phi: \mathcal{E}_1 \rightarrow \mathcal{E}_2$ between any $\mathcal{E}_1, \mathcal{E}_2 \subseteq \mathcal{E}$, we denote the \textit{push-forward} of a measure $\nu$ by $\phi$ as $\phi_{\#} \nu(A) = \nu(\phi^{-1}(A)) \; \; \forall A \subseteq \mathcal{E}_2$.
We provide a summary of notations in Appendix \ref{app:background} Table \ref{tab:notation}.

% \vspace{-3mm}
\subsection{Desired Globalness Properties}\label{sec:properties}
\vspace{-1mm}

Given a black-box model $F$ and data distribution $X$, we define the globalness of an explainer $E$ as the spread of the distribution of explanations $\expdistnfull$\footnote{We omit the subscript $(E,F,X)$ when clear from context.} over $\explanationSpace$. 
In this section, we introduce properties which are intuitively desirable for any globalness measure.

First, we establish basic properties that we expect a globalness metric should hold (\ref{prop:non-negativity}, \ref{prop:continuity}, \ref{prop:convexity}).
In \ref{prop:local} and \ref{prop:global}, we define the desired upper and lower bounds. Then, in \ref{prop:isometry_invariance} we define how a globalness measure should capture behavior related to explanation distances and transformations.
In lieu of specific measures, which we define in Section \ref{sec:wasserstein_globalness}, we reference a general globalness measure $G: \distnset \to \mathbb{R}$, which maps a distribution of explanations to a real-valued globalness score.

\begin{enumerate}[label=P\arabic*, font=\textbf, wide=0pt]%, itemindent=*]
  %===============================
  % GROUP 1
    
  %===============================
  \item  \label{prop:non-negativity} \textbf{(Non-negativity)} \emph{For any probability measure $\mu \in \distnset$, $G(\mu) \geq 0$.}
  
  %===============================
  \item \label{prop:continuity} \textbf{(Continuity)} \emph{Let $\{\mu^{(n)}\}$ be a sequence of probability measures which converges weakly to $\mu$. Then $G(\mu^{(n)}) \rightarrow G   (\mu)$.}

  \item \label{prop:convexity} \textbf{(Convexity)} \emph{Let $\nu = \lambda P + (1 - \lambda) Q$ for some $\lambda \in [0,1]$ and $P,Q \in \distnset$. Then $G(\nu) \leq \lambda G(P) + (1 - \lambda) G(Q)$.}
  
\vspace{10pt}
% \hl{1}
\ref{prop:continuity} ensures that $G$ is continuous with respect to the explanation distribution $\mu$, even if $\mu$ is empirically estimated. 
We further discuss $\mu$ approximation in Section \ref{sec:sample_complexity}.
In \ref{prop:convexity} we consider the property of \emph{convexity}, which implies that the globalness of a mixture of two explanation distributions is less than the average of each distribution's globalness.
Convexity is a common qualification of diversity-related measures in various fields  \citep{ricotta}.

\vspace{6pt}

%=============================
\textbf{Upper and lower bounds of $\boldsymbol{G}$.}
We next establish the desired bounds for $G$.
A global explainer generates a single explanation for all samples $x \in \mathcal{X}$; we define this as the theoretical upper bound for quantifying explainer globalness, as it minimizes the number of explanations for a given dataset.
Importantly, this is not a prescriptive expectation for explainer behavior across all domains but rather a theoretical upper bound since a global explainer can always be applied.
This property is formalized in \ref{prop:global}.

\vspace{6pt}
Conversely, we need to define a baseline distribution, representing the minimally-global distribution of explanations, to establish a lower bound for globalness.
Similar to $d_\mathcal{E}$, this baseline distribution, denoted $\refdist \in \distnset$, should be selected based on the application or type of explanations.
Intuitively, an explainer with minimum globalness should have different explanations $E(x)$ for each sample $x \in \mathcal{X}$ and have ``evenly'' distributed explanations over $\mathcal{E}$.
Therefore a natural choice for this baseline is a uniform distribution over $\mathcal{E}$.
We denote $U_A$ for the feature attribution case and $U_S$ for the feature selection case with corresponding density/mass functions $u_A$ and $u_S$, respectively:
\begin{equation} \label{eq:uniform} \nonumber
\underbrace{u_A(x;k) \propto  \begin{cases}1 & ||x||_2 \leq k \\ 0 & \textrm{otherwise} \end{cases} }_{\textrm{Feature Attribution: }\mathcal{E} \subseteq \mathbb{R}^s}
\;
\underbrace{u_S(x) \propto  \begin{cases}1 & x \in \{0,1\}^s \\ 0 & \textrm{otherwise} \end{cases}}_{\textrm{Feature Selection: }\mathcal{E} \subseteq \{0,1\}^s}
\end{equation}

In the feature attribution case, since $\mathcal{E}$ is not necessarily bounded, we define the support of $U_A$ to be a closed ball of dimension $s$ and radius $k$ \footnote{For conciseness, we use $U_{A}$ in place of $U_{A,k}$ with $k$ fixed using the method as described.}.
The radius $k$ determines the baseline reference measure for comparing explanation globalness, and therefore should be selected such that $\textrm{supp}(\mu) \subseteq \textrm{supp}(U_A)$, where $\textrm{supp}(\mu)$ indicates the support of measure $\mu$.
In the use-case of comparing explainers, one option is to estimate this value empirically: specifically, let $\{E_1,...,E_m\}$ be a set of explainers with corresponding probability measures $\textrm{M} = \{\mu_{(E_1, F, X)},...,\mu_{(E_m, F, X)}\}$.
We fix $k = \max_{\zeta \in \textrm{M}} [ \sup_{x \in \textrm{supp}(\zeta)} ||x||_2]$.

\vspace{6pt}

In addition, to ensure consistent comparison between each $\{\mu_{(E_1, F, X)},...,\mu_{(E_m, F, X)}\}$ and $\refdist$, we define a translation to center each measure.
Let $\eta_\mathcal{E}: \mathcal{E} \rightarrow \mathcal{E}$ be a mapping to be defined based on $\mathcal{E}$.
In the feature attribution case, we define $\eta_A$ such that the push-forward of a probability measure $P$ has mean zero, i.e. $\int_{\mathcal{E}} x \; d(\eta_{A\#}P) = 0$. 
The feature selection case does not require centering; we set $\eta_S$ to be the identity mapping for notational consistency and define $\eta_{\mathcal{E}}$ to indicate either $\eta_{A}$ or $\eta_{S}$ depending on the relevant framework.

\vspace{6pt}

%=============================
Together, \ref{prop:local} and \ref{prop:global} fix minimally global explanations (defined to be $\refdist$) to achieve zero globalness, and fully-global explanations to achieve maximum globalness.

\vspace{6pt}
%===============================
% Group 2
%===============================
\item \label{prop:local} \textbf{(Fully-local measure)}. \emph{Let $\refdist \in \distnset$ be a uniform measure with density defined in Eq. \ref{eq:uniform}. Then $\centermu = \refdist \iff G(\mu) = 0$.}
%===============================
\item  \label{prop:global} \textbf{(Fully-global measure)} \emph{There exists an $x_0 \in \explanationSpace$ for which $G_p(\delta_{x_0}) \geq G_p(\mu)$ for all $\mu \in \distnset$.}

\vspace{10pt}
  %===============================
  % Group 3

  %%%%%%%%%%%%%%%%%%%%%%%%%%%%%%%%%%%%%%%%%%%%%
  \begin{figure*}[t]
  \centering
  \includegraphics[width=0.95\linewidth]{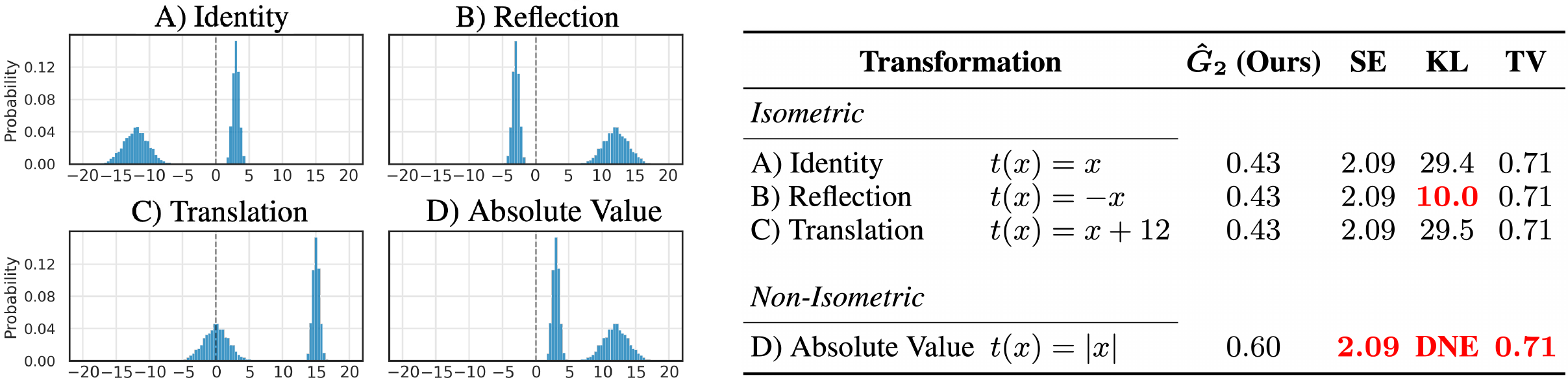}
  % \vspace{-2mm}
  \caption{\textbf{(Left)} Example of transformations applied to a synthetic distribution of 1d explanations, plotted as histograms. 
  Transformations B) and C) are isometric; D) is non-isometric.
  \textbf{(Right)} We compare \abbr{} ($\hat G_2$) to Entropy (SE) and two f-Divergences (Kullback-Leibler (KL) and Total Variation (TV)).
  KL is undefined for D) due to the change in support after transformation.
  \ref{prop:isometry_invariance} states that a globalness measure should be invariant to isometries while being sensitive to non-isometries.
  Violations of \ref{prop:isometry_invariance} are highlighted (bold, red).
  Only \abbr{} fully satisfies \ref{prop:isometry_invariance}.
  Figure details are provided in App. \ref{app:properties_example}.
  }
  \label{fig:properties_example}
  % \vspace{-4mm}
\end{figure*}
%%%%%%%%%%%%%%%%%%%%%%%%%%%%%%%%%%%%%%%%%%%%%

\textbf{Sensitivity to explanation distance metric $\boldsymbol{d_\mathcal{E}}$.}
The final property relates to the geometry of the distribution of explanations.
Intuitively, globalness measures the ``spread'' of explanations, which can be quantified by the distances between explanations.
Transformations that \emph{preserve} pairwise distances (i.e. isometries) therefore should not affect a globalness metric. Conversely, $G$ should be sensitive to transformations that scale or otherwise change pairwise distances.
\vspace{6pt}
  \item \label{prop:isometry_invariance} \textbf{(Selective Invariance to Transformations)} \emph{Let $T_{\explanationSpace, d_{\explanationSpace}}$ be the group of distance-preserving measurable maps $\phi$.}
  \begin{align*} \nonumber
      T_{\explanationSpace, d} = \{\phi : \explanationSpace \rightarrow \explanationSpace | d_{\explanationSpace}(x, y) = d_{\explanationSpace}(\phi(x), \phi(y)) \forall x,y \in \explanationSpace\}
      % \numberthis
  \end{align*} 
  \emph{Then $G$ is invariant with respect to the group $T_{\explanationSpace, d_{\explanationSpace}}$:}
  \begin{equation} \label{eq:isom_invariance}
    G(\mu) = G(\phi_{\#} \mu) \ \text{for all} \ \phi \in T_{\explanationSpace, d_{\explanationSpace}}
  \end{equation}
  \emph{Additionally, let $S$ denote all measurable bijective maps $\psi : \explanationSpace \rightarrow \explanationSpace$. Then $G$ is \emph{not} invariant to $S$:}
\begin{equation}
    \exists \psi \in S  \ \text{for which} \ G(\mu) \neq G(\psi_{\#} \mu)
\end{equation}
% \vspace{4pt}
An example of isometric and non-isometric transformations are shown in Figure \ref{fig:properties_example} (Left) and App. \ref{app:additional_isometry_figure}. A measure of globalness should take into account a notion of similarity between explanations, which is captured by the selected distance metric, $d_{\explanationSpace}$. 
Distance or similarity metric selection for explanations is an open area of research; various metrics have been used in the literature \citep{fel2022good,ghorbaniInterpretationNeuralNetworks2019}, such as Spearman's rank correlation, Euclidean distance, or top-k intersection.
The choice of metric should be made based on the specific application, therefore it is important for any globalness metric to allow for user flexibility in selecting $d_{\explanationSpace}$.

\vspace{6pt}

Therefore, \ref{prop:isometry_invariance} ensures that the globalness metric is sensitive to transformations that affect pairwise explanation distances, but conversely is invariant to distance-preserving transformations.
This notion of distance and similarity is directly encoded by the choice of $d_{\explanationSpace}$. 

\end{enumerate}

% \vspace{-2mm}
\subsection{Candidate Measures} \label{sec:candidates}
\vspace{-1mm}
Several traditional metrics can be trivially used to estimate $G$; however, these metrics do not fully satisfy the stated properties.

\textbf{Shannon Entropy.}
Entropy \citep{shannon} is often used to quantify the ``spread'' of a distribution.
Let $\nu, u$ represent the density/mass functions associated with measures $\mu$ and $\refdist$, respectively. Then Shannon Entropy is defined as $H(X) = \mathbb{E}_{X \sim \nu}\left[-\log\left(\nu(X)\right)\right]$.

\textbf{f-Divergences.}
One might also consider using an f-divergence \citep{fdiv}
between $\mu$ and the baseline measure $\refdist$: $D_f(\nu\|u) = \mathbb{E}_{X \sim u}\left[f\left(\frac{\nu(X)}{u(X)}\right)\right]$.

While both Entropy and f-divergences satisfy several of the desiderata in \ref{sec:properties}, they both fail to satisfy \ref{prop:isometry_invariance}. We provide an empirical example in Figure \ref{fig:properties_example} and prove this theoretically in App. \ref{sec:pf_fdiv}.
Intuitively, both approaches fail to capture explanation distances.
For example, both candidates are invariant to feature permutation, which is a non-isometric transformation.

% \vspace{-3mm}
\section{WASSERSTEIN GLOBALNESS} \label{sec:wasserstein_globalness}
\vspace{-2mm}
We now formalize a globalness metric to satisfy the desired properties outlined in Section \ref{sec:axioms}.
In \ref{prop:local}, we established a reference distribution, $\refdist \in \distnset$, that represented the minimally global explanation on $\mathcal{E}$.
From this definition, we can apply a distance metric on $\distnset$ to measure the distance between the explanation distribution and the reference distribution.
In particular, we use the p-Wasserstein distance \citep{kantorovich}, which allows the \abbr{} formulation to inherit several useful properties.

\begin{definition}[Wasserstein Globalness] \emph{Let $(\explanationSpace, d_{\explanationSpace})$ be a metric space, and $d_W^p(P,Q)$ be the p-Wasserstein distance between two probability measures $P,Q$ on $\mathcal{E}$. Let $\refdist$ be a uniform measure on $\mathcal{E}$ (Eq. \ref{eq:uniform}), and $\eta_\mathcal{E}: \mathcal{E} \rightarrow \mathcal{E}$ be a centering mapping. Then the Wasserstein Globalness $G_p(\mu)$ is defined as follows:}
\label{def:WasG}
    \begin{equation}\label{eq:Wasserstein_def}
        % \begin{align}
            G_p(\mu) \coloneqq d_W^p(\centermu, \refdist)
    \end{equation}
    \begin{equation}
        = \inf_{\pi \in \Pi(\centermu, \refdist)} \left[ \mathbb{E}_{(x,y) \sim \pi}[d_{\explanationSpace}(x,y)^p]\right]^{1/p}
    \end{equation}
    % \end{equation}
\end{definition}

% resets theorem counter 
\setcounter{theorem}{0}
\begin{theorem} 
\label{theorem:props_satisfied}
Let $\distnset$ be the set of probability measures over $\explanationSpace$, where $\explanationSpace \subseteq \mathbb{R}^s$ (feature attribution) or $\explanationSpace \subseteq \{0,1\}^s$ (feature selection). Wasserstein Globalness $G_{p}: \distnset \to \mathbb{R}$ satisfies Properties 1-6. 
\end{theorem}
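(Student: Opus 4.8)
The plan is to verify the six axioms in turn, using that $G_p(\mu) = d_W^p(\centermu,\refdist)$ is, up to the centering push-forward $\eta_{\explanationSpace}$, just the $p$-Wasserstein distance to a fixed reference measure, and that $d_W^p$ is a genuine metric on the space of probability measures with bounded support on $\explanationSpace$. Given that, \ref{prop:non-negativity} is immediate ($d_W^p\geq 0$) and \ref{prop:local} is exactly the statement that a metric separates points, i.e.\ $d_W^p(\centermu,\refdist)=0 \iff \centermu=\refdist$. For \ref{prop:continuity} I would chain three facts: (i) the centering map is weak-convergence continuous --- by the bounded-support assumption, $\mu^{(n)}\to\mu$ weakly forces $\int x\,d\mu^{(n)}\to\int x\,d\mu$, so the translations defining $\eta_A$ converge and $\centermu^{(n)}\to\centermu$ weakly (trivial in the feature-selection case where $\eta_S=\mathrm{id}$); (ii) restricted to a bounded subset of $\explanationSpace$, weak convergence coincides with convergence in $d_W^p$; (iii) $d_W^p(\cdot,\refdist)$ is $1$-Lipschitz in $d_W^p$ by the triangle inequality. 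Composing (i), (ii) and (iii) gives $G_p(\mu^{(n)})\to G_p(\mu)$.

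For \ref{prop:isometry_invariance}, the engine is that if $\phi$ preserves $d_{\explanationSpace}$ then $(\phi\times\phi)_{\#}$ maps any coupling to a coupling of the same cost, so $d_W^p$ is unchanged by pushing both of its arguments through $\phi$. The remaining work is to commute $\phi$ with $\eta_{\explanationSpace}$ and to check $\phi$ fixes $\refdist$: in the attribution setting every $d_A$-isometry is $x\mapsto Ox+b$ with $O$ orthogonal, the translation $b$ drops out under centering while $O$ passes through it, and $O_{\#}U_A=U_A$ because $U_A$ is uniform on a ball; in the selection setting the $d_S$-isometries are generated by coordinate permutations and coordinate flips, each of which fixes $U_S$, and $\eta_S$ is trivial. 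Non-invariance to the larger group $S$ I would witness explicitly --- e.g.\ the rescaling $x\mapsto(2x_1,x_2,\dots)$ in the attribution case strictly inflates some pairwise distances and hence changes $d_W^p(\cdot,\refdist)$, and a transposition of two non-adjacent vertices changes Hamming distances in the selection case.

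For \ref{prop:global}, I would use the same convexity of the transport cost that underlies \ref{prop:convexity}: writing $\centermu=\int\delta_x\,d(\centermu)(x)$ and using joint convexity of $\pi\mapsto\mathbb E_\pi[d_{\explanationSpace}^p]$, one gets $G_p(\mu)^p\leq\int d_W^p(\delta_x,\refdist)^p\,d(\centermu)(x)\leq\sup_{x\in\operatorname{supp}(\centermu)} d_W^p(\delta_x,\refdist)^p$, and because $\operatorname{supp}(\centermu)\subseteq\operatorname{supp}(\refdist)$ (by the choice of radius $k$) and $\refdist$ is highly symmetric (coordinate flips for $U_S$, rotations for $U_A$), the supremum is attained at a Dirac, which supplies $x_0$; in the feature-selection case this is immediate since $d_W^p(\delta_v,U_S)$ is the same for every vertex $v$, so $G_p(\mu)\leq d_W^p(\delta_{x_0},U_S)=G_p(\delta_{x_0})$, and the attribution case is analogous once the centering of $\delta_{x_0}$ is accounted for.

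\ref{prop:convexity} is the step I expect to be the real obstacle. The natural move is: the cost $\pi\mapsto\mathbb E_\pi[d_{\explanationSpace}^p]$ is linear and a convex combination of couplings couples the convex combination of the marginals, so one obtains joint convexity of $(d_W^p)^p$. But \ref{prop:convexity} asks for convexity of $G_p$, not of its $p$-th power, and for $p>1$ the former does not follow from the latter (indeed $d_W^p(\cdot,\sigma)$ is not convex for a point mass $\sigma$). Worse, $\eta_{\explanationSpace}$ is not affine under mixtures --- $\eta_{\explanationSpace}$ applied to $\lambda P+(1-\lambda)Q$ uses the single translation $-(\lambda m_P+(1-\lambda)m_Q)$, so $\centermu$ of a mixture is a mixture of \emph{translated} copies of $\eta_{\explanationSpace\#}P$ and $\eta_{\explanationSpace\#}Q$. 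My plan is to exploit that $\refdist$ is uniform --- hence radially symmetric and log-concave, so $d_W^p(\cdot,\refdist)$ does not increase when mass is shifted toward the reference's center --- to absorb those translations into the transport plan at no extra cost, and to recover the degree-$1$ bound by building a single coupling for the mixture directly from the two optimal couplings rather than passing through the $p$-th power. Everything else is bookkeeping; this is where the particular choice of $\refdist$ genuinely has to be used.
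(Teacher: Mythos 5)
For \ref{prop:non-negativity}, \ref{prop:continuity}, \ref{prop:local}, \ref{prop:global}, and \ref{prop:isometry_invariance} your route is essentially the paper's: non-negativity and the zero-iff-equal property from $d_W^p$ being a metric; continuity via the reverse triangle inequality against the fixed reference $\refdist$ plus the equivalence of weak and Wasserstein convergence on bounded supports (the paper cites Remark~2.4 of M\'emoli for the first step); \ref{prop:global} by pushing Diracs through the transport cost and bounding by the worst $p$-eccentricity; \ref{prop:isometry_invariance} by transporting couplings through $\phi\times\phi$ and checking $\phi_{\#}\refdist=\refdist$. In a few places you are more careful than the paper: you explicitly argue that the centering map $\eta_A$ is continuous under weak convergence of $\mu^{(n)}$ (the paper silently applies the M\'emoli bound to the centered measures), and you factor a Euclidean isometry as $x\mapsto Ox+b$ with the translation absorbed by $\eta_A$ and the orthogonal part fixing the ball, whereas the paper's split into ``translations'' versus ``non-translations'' is not a clean decomposition of the isometry group. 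Your concrete witnesses for the second half of \ref{prop:isometry_invariance} are also fine, though you should pair the non-isometry $\psi$ with an explicit $\mu$ for which the values differ (e.g.\ not $\mu$ with $\centermu=\refdist$).

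The substantive issue is \ref{prop:convexity}, and your diagnosis of it is exactly right --- more right than you may realize. The paper's own proof takes the mixture coupling $\lambda\psi_P+(1-\lambda)\psi_Q$ and then asserts
\begin{equation*}
\bigl[\mathbb{E}_{\lambda\psi_P+(1-\lambda)\psi_Q}[d_{\explanationSpace}^p]\bigr]^{1/p}
=\lambda\bigl[\mathbb{E}_{\psi_P}[d_{\explanationSpace}^p]\bigr]^{1/p}+(1-\lambda)\bigl[\mathbb{E}_{\psi_Q}[d_{\explanationSpace}^p]\bigr]^{1/p},
\end{equation*}
which is precisely the illegitimate passage from convexity of $(d_W^p)^p$ to convexity of $d_W^p$ that you refuse to make; by the power-mean inequality the left side is in fact $\geq$ the right side for $p>1$. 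However, your proposed repair (using symmetry and log-concavity of $\refdist$ to recover a degree-one bound) cannot succeed, because the statement itself fails for $p>1$ under the paper's own definitions. Take the feature-selection setting with $s=1$, so $\explanationSpace=\{0,1\}$, $\eta_S=\mathrm{id}$, $\refdist=\tfrac12(\delta_0+\delta_1)$, and $p=2$. Let $P=\delta_0$ and $Q=\refdist$, so $G_2(P)=\sqrt{1/2}$ and $G_2(Q)=0$. With $\lambda=\tfrac12$ the mixture is $R=\tfrac34\delta_0+\tfrac14\delta_1$, whose optimal coupling with $\refdist$ moves mass $\tfrac14$ across distance $1$, giving $G_2(R)=\tfrac12>\tfrac12\sqrt{1/2}=\lambda G_2(P)+(1-\lambda)G_2(Q)$. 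What is true, and what both your argument for \ref{prop:global} and the paper's actually need, is convexity of $G_p^p$ (equivalently, convexity of $G_p$ only for $p=1$); so the honest resolution is either to restrict \ref{prop:convexity} to $p=1$ or to restate it for the $p$-th power, not to look for a cleverer coupling.
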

The proof of Theorem \ref{theorem:props_satisfied} is provided in App. \ref{pf:props_satisfied}. Because it satisfies these intuitive properties, \abbr{} is a satisfactory way to quantify explanation globalness.

The \abbr{} framework is highly flexible, requiring only the definition of a metric space $(\explanationSpace, d_{\explanationSpace})$.
This allows it to be applied for both feature attribution (continuous) and feature selection (discrete) cases by selecting an appropriate distance metric $d_{\explanationSpace}$.
Moreover, users have the flexibility to adopt alternative notions of explanation similarity, enabling additional transformation invariances as needed. For example, if only relative feature rankings matter and not the explicit attribution values, users can set $d_{\explanationSpace}$ to the Kendall-Tau distance and $\refdist$ to a uniform distribution over permutations. This configuration makes \abbr{} sensitive to changes in ranking but invariant to scaling.
In App. \ref{sec:recommended_spaces} Table \ref{tab:d_examples} we list possible $d_{\explanationSpace}$ and $\refdist$ selections.

Since \abbr{} is upper bounded by fully-global explainers (\ref{prop:global}), we additionally normalize $G_p$ to improve interpretability.
First, note that all fully-global explainers will have the same WG value. 
Let $\dot \mu$ be the probability measure for a global explanation; we scale $G_p(\mu)$ by $G_p(\dot \mu)$ to normalize $G_p(\dot \mu)=1$.
Since WG is zero for fully-local explainers, the WG metric is thus bounded between $[0,1]$.

%%%%%%%%%%%%%%%%%%%%%%%%%%%%%%%%
% \vspace{-3mm}
\subsection{Sample Complexity} \label{sec:sample_complexity}
\vspace{-1mm}

In practice we do not have access to the true distribution $\mu$; we instead empirically estimate $\mu$ with $N$ sampled explanations.
Similarly, using an empirical approximation for the uniform distribution allows us to estimate the \abbr{} with a discretized formulation: 
\begin{definition}[Empirical Wasserstein Globalness] \emph{Let $Q^{(N)}$ be the empirical approximation for a measure $Q \in \distnset$ with $N$ samples, where $Q^{(N)} = \frac{1}{N}\sum_{i=1}^N \delta_{q_i}$ for $q_i \overset{\mathrm{iid}}{\sim} Q$. Then the empirical Wasserstein Globalness $\hat{G}_p(\mu^{(N)})$ is defined as:}
    \begin{equation} \label{eq:discrete_wasserstein}
        % \vspace{-3mm}
        \hat{G}_p(\mu^{(N)}) \coloneqq d_W^p\left(\centermu^{(N)}, \refdistemp\right)
    \end{equation}
    \vspace{-8mm}
\end{definition}
We now theoretically evaluate how well $\hat{G}_p(\mu^{(N)})$ approximates $G_p(\mu)$ through sample complexity bounds:
% Enforces theorem numbering 
\setcounter{theorem}{1}
\begin{theorem}\label{theorem:approximation_error} 
Let $p$ be the order of the \abbr{}, and let $M_q(\mu) = \int_{\mathbb{R}^d} |x|^q d\mu(x)$. If $p \in (0, d/2)$, $q \neq d/(d - p), \ q \geq \frac{dp}{d-p}$, then:
\begin{align*}
\mathbb{E}\left[|\hat{G}_p(\mu^{(N)}) - G_p(\mu)|\right] \leq \kappa_{p,q} M_q(\centermu)^{1/q} N^{-1/d} \\
 + C_{p,q,d} M_q^{p/q}(\centermu)(N^{-p/d} + N^{-(q-p)/q})
\end{align*}
\vspace{-7mm}
\end{theorem}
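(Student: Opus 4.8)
The plan is to reduce the statement to classical bounds on the rate of convergence of empirical measures in Wasserstein distance. By definition $\hat{G}_p(\mu^{(N)}) = d_W^p(\centermu^{(N)}, \refdistemp)$ and $G_p(\mu) = d_W^p(\centermu, \refdist)$, and the expectation is over the $N$ i.i.d.\ samples defining the empirical measures $\centermu^{(N)}$ and $\refdistemp$. For $p \ge 1$, $d_W^p = W_p$ is a genuine metric on $\distnset$, so the reverse triangle inequality gives
\[
\bigl|\hat{G}_p(\mu^{(N)}) - G_p(\mu)\bigr| \le d_W^p(\centermu^{(N)}, \centermu) + d_W^p(\refdistemp, \refdist).
\]
(For $p \in (0,1)$ the same step is run with $W_p^p$, which is a metric, and converted back using the local Lipschitzness of $t \mapsto t^{1/p}$ on the bounded range of distances, finite since $\mathrm{supp}(\mu)$ and $\mathrm{supp}(\refdist)$ are bounded.) After taking expectations, it remains to control $\mathbb{E}[d_W^p(\centermu^{(N)}, \centermu)]$ and $\mathbb{E}[d_W^p(\refdistemp, \refdist)]$ separately.

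For the first term I would invoke the Fournier--Guillin estimate for the empirical measure: when $p \in (0, d/2)$ and $q \ne d/(d-p)$,
\[
\mathbb{E}\bigl[W_p^p(\centermu^{(N)}, \centermu)\bigr] \le C_{p,q,d}\, M_q^{p/q}(\centermu)\,\bigl(N^{-p/d} + N^{-(q-p)/q}\bigr),
\]
which applies since $M_q(\centermu) < \infty$ (bounded support of $\mu$, hence of $\centermu$) and since $q \ge dp/(d-p) > p$, consistent with the moment requirement. This already matches the second bracketed term of the claim. To pass from $\mathbb{E}[W_p^p]$ to $\mathbb{E}[W_p]$, split the expectation on $\{W_p \le 1\}$ and $\{W_p > 1\}$: on the former $W_p \le (W_p^p)^{1/p}$ and Jensen gives $\mathbb{E}[W_p;\, W_p \le 1] \le \mathbb{E}[W_p^p]^{1/p}$; on the latter $W_p \le W_p^p$, so $\mathbb{E}[W_p;\, W_p > 1] \le \mathbb{E}[W_p^p]$. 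Using $(a+b)^{1/p} \le a^{1/p}+b^{1/p}$, the $(N^{-p/d})^{1/p} = N^{-1/d}$ contribution yields $\kappa_{p,q}\,M_q(\centermu)^{1/q}\,N^{-1/d}$, while the residual $(N^{-(q-p)/q})^{1/p} = N^{-(q-p)/(pq)}$ is $\le N^{-1/d}$ precisely because $q \ge dp/(d-p)$, and is therefore absorbed into the same term.

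For the reference term, $\refdist$ is a fixed uniform measure on the ball of radius $k$ in $\mathbb{R}^d$, so $M_q(\refdist) < \infty$ and the same estimate gives $\mathbb{E}[d_W^p(\refdistemp, \refdist)] \lesssim N^{-1/d}$ with a constant depending on $k$ (equivalently on the size of $\mathrm{supp}(\mu)$); this contributes only a further $N^{-1/d}$ term, folded into the leading constant. Summing the two contributions and relabelling constants gives the stated inequality.

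The main obstacle is the bookkeeping in the middle step: the cited estimate naturally controls $\mathbb{E}[W_p^p]$, and one must carefully convert it into a bound on $\mathbb{E}[W_p]$ whose surviving terms collapse to exactly $N^{-1/d}$, $N^{-p/d}$ and $N^{-(q-p)/q}$ with moment powers $M_q^{1/q}$ and $M_q^{p/q}$, rather than a larger family of powers of $N$ and $M_q$. This is precisely where the hypotheses enter: $p \in (0,d/2)$ and $q \ne d/(d-p)$ legitimize the empirical-measure rate, while $q \ge dp/(d-p)$ guarantees the leftover exponents are dominated by those already listed; and for $p<1$ the triangle-inequality step must be routed through $W_p^p$. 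The underlying empirical-measure rate itself I would cite rather than reprove.
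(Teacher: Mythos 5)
Your proposal is correct and follows essentially the same route as the paper: a triangle-inequality decomposition of $|\hat{G}_p(\mu^{(N)}) - G_p(\mu)|$ into the error from replacing $\centermu$ by $\centermu^{(N)}$ and the error from replacing $\refdist$ by $\refdistemp$, each controlled by a known empirical-measure convergence rate (the paper packages these as two lemmas, citing Fournier--Guillin for the first term and Dereich et al.\ for the $N^{-1/d}$ reference-measure term, where you reuse Fournier--Guillin). Your explicit handling of the passage from $\mathbb{E}[W_p^p]$ to $\mathbb{E}[W_p]$ and of the case $p<1$ is actually more careful than the paper's proof, which elides that conversion.
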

Proof details are shown in App. \ref{app:proof_sample_complexity}. From this result, we immediately see that $\mathbb{E}\big{[}|\hat G_p(\mu^{(N)}) - G_p(\mu) |\big{]} \rightarrow 0$ as $N \rightarrow \infty$.
However, increasing the number of features $d$ leads to a looser bound, necessitating an increasing number of samples as N increases.

%%%%%%%%%%%%%%%%%%%%%%%%%%%%%%%%
% \vspace{-2mm}
\subsection{\abbr{} Implementation} \label{sec:implementation}
\vspace{-1mm}
In practice, a user may need to choose between different explainers for a pretrained model and dataset. While faithfulness metrics can help compare explainers, many explainers can exhibit similar faithfulness scores. In such cases, \abbr{} can be used to differentiate between high-faithfulness explainers by identifying the lower-complexity explainer.

The algorithm for \abbr{} is detailed in Appendix \ref{app:algorithm}.
In summary, we begin by selecting a distance metric $d_\explanationSpace$ and a baseline distribution $U_\explanationSpace$ based on the type of explainer and its application.
We then compute $\mu^{(N)}$ (Eq. \ref{eq:discrete_wasserstein}) by drawing a set of $N$ samples from the dataset and applying the explainer to each sample. We draw samples from $U_\mathcal{E}^{(N)}$. Finally, we compute the Wasserstein distance between $\mu^{(N)}$ and $U_\mathcal{E}^{(N)}$.

Wasserstein distances are challenging to calculate in high dimensions; we rely on approximation methods for \abbr{}.
While generally any approximation method can be used, Sliced Wasserstein \citep{bonneel2015sliced} is particularly efficient for the feature attribution case when $d_\explanationSpace$ is Euclidean distance, due to symmetry in the baseline distribution.
Alternatively, Entropic regularization \citep{sinkhorn} is another widely-used algorithm that can be used with other distance metrics. We utilize \citet{pot} for our implementation of \abbr{}.
Additional detail on approximation methods are outlined in App. \ref{app:implementation}.
Empirical time complexity results are provided in App. \ref{app:time}.
% \vspace{-3mm}
\section{EXPERIMENTS} \label{sec:experiments}
\vspace{-2mm}

\begin{figure*}[t]
  \centering
  \includegraphics[width=1\textwidth]{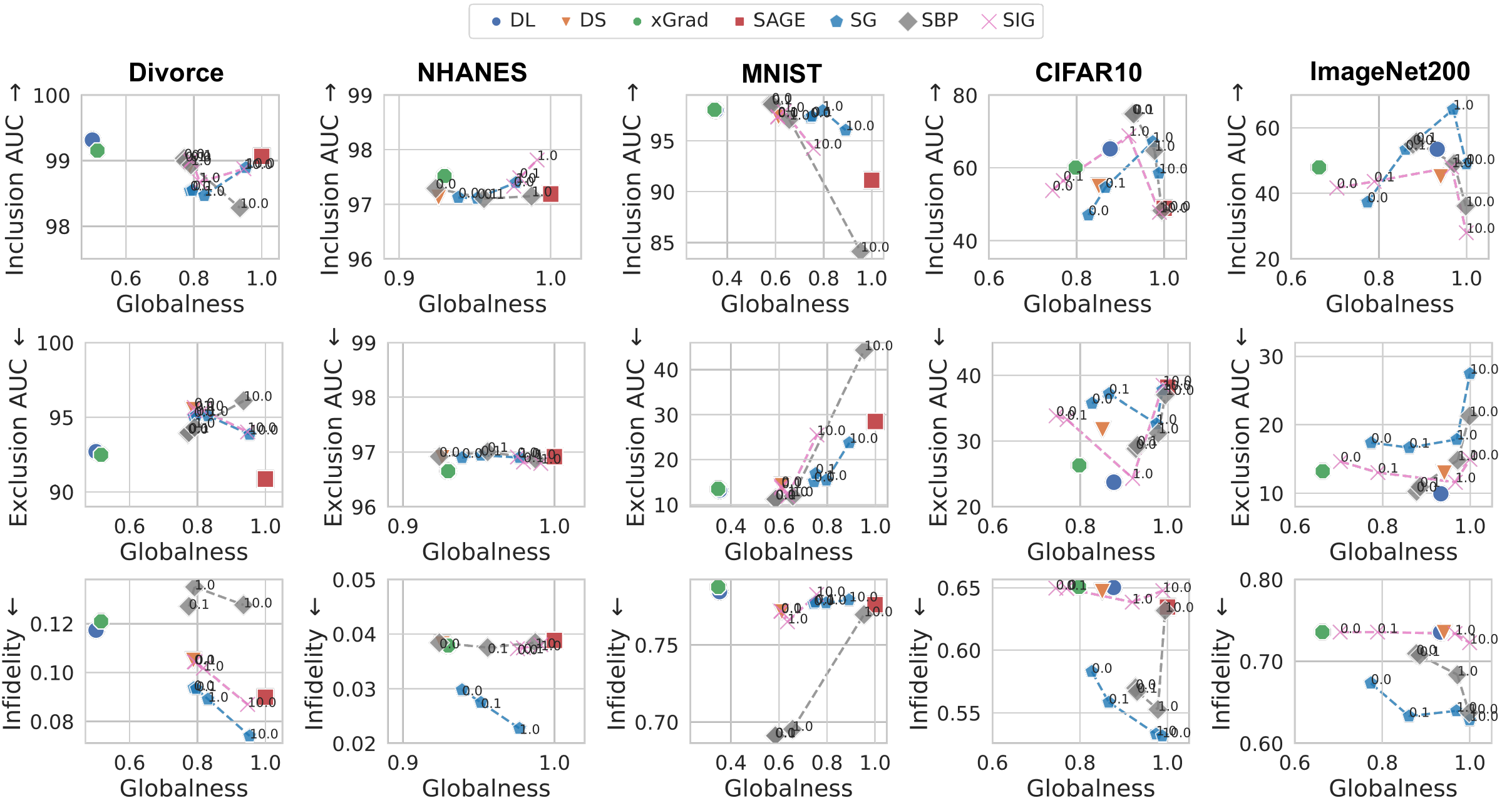}
  \vspace{-1mm}
  \caption{A comparison of explanation \emph{faithfulness} (IncAUC, ExAUC, and Infidelity) and \emph{globalness}.
  We vary the smoothing parameter $\sigma = \{0.0, 0.1, 1, 10\}$ for SG, SBP, and SIG and plot with a connecting line. 
  We observe that explainers can often exhibit similar faithfulness with varying levels of globalness, especially for simpler datasets. In this situation, we would prefer high-faithfulness explainers with higher globalness (i.e. lower complexity).
  }
  \label{fig:auc}
  % \vspace{-3mm}
\end{figure*}

\begin{figure*}[t]
  \centering
  \includegraphics[width=0.98\textwidth]{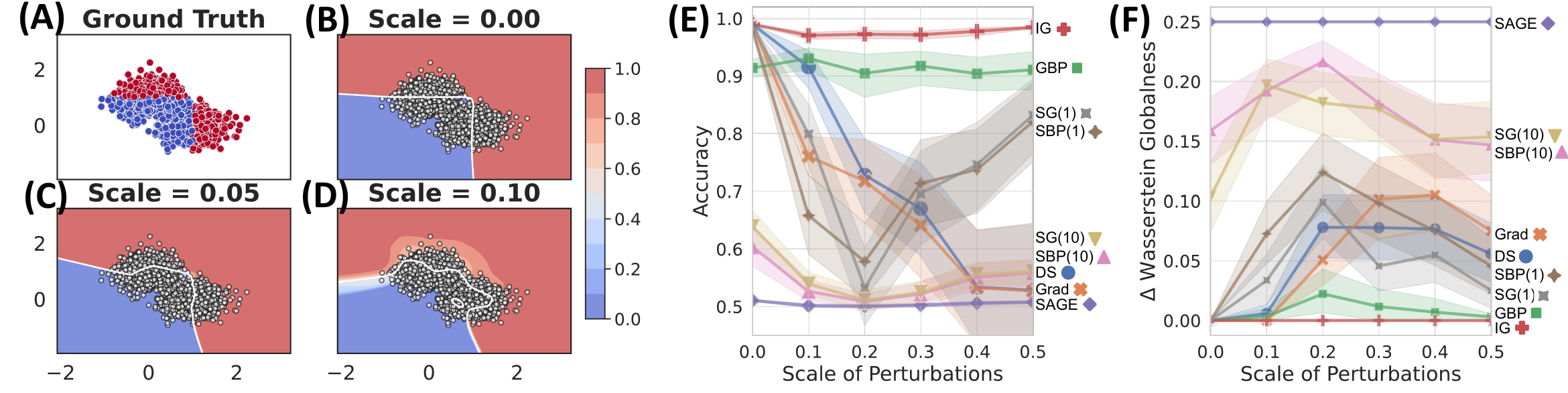}
  \vspace{-2mm}
  \caption{A classifier is trained on the Jagged Boundary Dataset \textbf{(A)} with increasing scale of perturbations. The model probability output is plotted as a heatmap \textbf{(B-D)}, with white decision boundary. \textbf{(E)} Comparison of explainer accuracy with increasing perturbation scale. Error bars indicate 95\% CI over 50 sampled perturbations. \textbf{(F)} Difference in \abbr{} score between each explainer and the ground truth explanations. IG is the closest to the ground truth globalness ($\Delta$ = 0) while also yielding the highest accuracy.}
  \label{fig:jagged_boundary}
  % \vspace{-3mm}
\end{figure*}

We evaluate \abbr{} on a variety of explainers, datasets and classifiers. In Sec. \ref{sec:experiment_auc} we show how various explainers can be compared based on faithfulness metrics against their level of globalness. In Sec. \ref{sec:experiment_jagged_boundary} we illustrate how \abbr{} helps in evaluating explainer effectiveness. In Sec. \ref{sec:experiment_clustering} we show how \abbr{} captures increased explanation diversity.
We approximate \abbr{} with $p=2$, (i.e. $\hat G_2$) and select $d_\mathcal{E}, U_\mathcal{E}$ based on the relevant framework (Sec. \ref{sec:axioms}).
All experiments were run on an internal cluster using AMD EPYC 7302 16-Core processors with Nvidia V100 GPUs.
All source code is publicly available at \url{https://github.com/davinhill/WassersteinGlobalness}.

\textbf{Datasets.} We evaluate \abbr{} on 2 tabular dataset (Divorce \citep{uci_datasets}, NHANES \citep{millerPlanOperationHealth1973}), 3 image datasets (MNIST \citep{lecunMNISTHandwrittenDigit2010}, CIFAR10 \citep{krizhevskyLearningMultipleLayers2009}, ImageNet200 \citep{imagenet}), and 1 synthetic dataset (Jagged Boundary, described in Section \ref{sec:experiment_jagged_boundary}).
Datasets and model details are listed in App. \ref{app:datasets}.

\textbf{Explainers.} 
We apply \abbr{} on a variety of explainers, including Saliency map (Grad) \citep{Simonyan2014DeepIC}, Integrated Gradients (IG) \citep{sundararajanAxiomaticAttributionDeep2017}, DeepSHAP (DS) \citep{shap_lundberg}, Guided Backpropagation (GBP) \citep{guided_backprop}, and SAGE \citep{covert2020understanding}. \citet{smoothgrad} propose a smoothing operation for explanation methods: given a user defined $\sigma >0$, 
point of interest $x \in \inputspace$ and explanation method $E : \inputspace \rightarrow \explanationSpace$ the smoothed explainer may be written as:
\vspace{-2mm}
\begin{equation} \label{eq:smoothgrad}
    % \vspace{5mm}
    E_{\sigma \text{-smooth}}(x) = \mathbb{E}_{\delta \sim \mathcal{N}(0, \sigma^2 I)}[E(x + \delta)]
    \vspace{-2mm}
\end{equation}
We apply Eq. \ref{eq:smoothgrad} to Saliency map (abbreviated SG$_\sigma$), Guided Backpropagation (SBP$_\sigma$), and Integrated Gradients (SIG$_\sigma$) to generate smoothed explanations with varying $\sigma$.
Note that SG$_0$, SBP$_0$, and SIG$_0$ are equivalent to Grad, GBP, and IG, respectively.

Explainer details are listed in App. \ref{app:explainers}.

%%%%%%%%%%%%%%%%%%%%%%%%%%%%%%%%%%%%%%%%%%%%%%
% \vspace{-2mm}
\subsection{Comparing Faithfulness and Globalness} \label{sec:experiment_auc}
\vspace{-1mm}

In Figure \ref{fig:auc} we explore the relationship between \abbr{} and three established faithfulness metrics for explainers: \emph{Inclusion} / \emph{Exclusion} AUC (incAUC / exAUC) \citep{jethani2022fastshap}, and Infidelity \citep{yeh2019fidelity}.
Exclusion AUC iteratively masks the most important features from the inputs, calculates the accuracy between the original samples and masked samples, then sums the area under the resulting curve. 
Conversely, IncAUC starts with all features masked, iteratively unmasks the most important features, then similarly evaluates accuracy with respect to the original samples. 
Higher IncAUC (resp. ExAUC) indicates better (resp. worse) explainer performance. 
Infidelity estimates how well explanations capture the change in model output under perturbation to the model inputs; higher values indicate worse performance. Standard deviation results are provided in App. \ref{app:auc_sd}.

First, note that \abbr{} increases as the smoothing parameter $\sigma$ for SG, SBP, and SmoothIG increases. 
This follows intuition; increasing $\sigma$ averages explanations over wider neighborhoods, increasing globalness.
Higher values of $\sigma$ produce explanations with globalness close to $1$, which indicates maximum globalness.

Secondly, explainers with low globalness (e.g. xGrad) do not necessarily have the best performance; this is dependent on the dataset.
For smaller datasets, such as Divorce, explainers with higher globalness (e.g. SAGE) generally achieve lower Infidelity and ExAUC. 
In contrast, explainers with lower globalness (e.g. SG$_{1}$ and SBP$_1$) achieve better performance on more complex datasets such as CIFAR10 and ImageNet. However, explainers with minimal globalness (SG$_{0}$ and SIG$_0$) also perform relatively poorly on these datasets.

Finally, we observe that many explainers achieve similar IncAUC/ExAUC/Infidelity performance with varying levels of globalness. For example, DS, GBP, IG, and SG$_{1.0}$ achieve Inclusion AUC between 98.1 to 98.2 on MNIST, however SG$_{1.0}$ produces the most global explanations of this group.  For the same level of AUC performance, one should choose the more global explanation.
 In summary, explainers are historically compared on masking and perturbation-based performance metrics, however this experiment reveals that we have been comparing apples and oranges.  Instead it is crucial that we assess explainer efficacy while considering similar levels of performance.

% \vspace{-3mm}
\subsection{Synthetic Ground Truth Globalness} \label{sec:experiment_jagged_boundary}
\vspace{-1mm}

While explainers are often utilized to understand a given black-box prediction model, they have also been used to gain insights into the underlying data distribution \citep{catav2021marginal, cui2022gene, masoomi2022explanations}. 
In this setting, the black-box model is assumed to capture the ground-truth dependency between features and labels; explanation methods are then utilized to infer the ``true'' feature importance scores from data.

In this experiment, we evaluate the importance of choosing an explainer that reflects the ``true globalness'' of the underlying distribution.
To this end, we create a 2-d synthetic dataset, Jagged Boundary (Figure \ref{fig:jagged_boundary}(A)), where samples are defined to have a ground-truth explanation. 
Jagged Boundary contains two Gaussian clusters in $\mathbb{R}^2$; each cluster is divided into two classes, and each cluster has a different relevant feature for discriminating between classes, which is defined as the ground-truth explanation. 
After generating Jagged Boundary, we increasingly perturb the dataset to simulate added noise (Figure \ref{fig:jagged_boundary}(B-D)) and retrain the classifier, obfuscating the ground truth explanations (App. \ref{app:experiment_details}). 

We evaluate the ability of each explainer $E$ to select the single most relevant feature for each sample. 
As ground-truth explanations are known, we can compute explainer accuracy as $Acc(E) = \frac{1}{N} \sum_{i=1}^N \mathds{1}[\argmax_{k \in \{1,2 \}} |E(x_i)_k| = e(x_i)]$ (Figure \ref{fig:jagged_boundary}(E)), where $e(x_i)$ represents the ground-truth explanation for sample $x_i$.
We additionally compare each explainer's WG with the WG of the ground-truth explanations; the difference ($\Delta$ Wasserstein Globalness) is shown in Figure \ref{fig:jagged_boundary}(F).

We observe that the ground truth globalness is low; the two clusters are balanced, therefore explanations should be uniformly distributed. 
The best-performing explainer, IG, exhibits minimal globalness with low variance over perturbations. 
In contrast, the global explainer SAGE is unable to capture the two explanation groups and exhibits low accuracy. 
Therefore, explainers that captures similar globalness levels as the underlying data also improve accuracy.

%%%%%%%%%%%%%%%%%%%%%%%%%%%%%%%%%%%%%%%%%%%%%%
% \vspace{-3mm}
\subsection{WG on Explanation Groups} \label{sec:experiment_clustering}
\vspace{-1mm}

Intuitively, \abbr{} captures how diverse a set of explanations are; a single global explanation for the entire dataset has no diversity, whereas  uniformly distributed explanations would yield minimal globalness.
In this experiment, we evaluate how well \abbr{} captures increasing diversity of explanations.
To simulate increasing explanation diversity, we generate explanations for 1000 samples from MNIST, CIFAR10, and ImageNet, then separate explanations by class label.
Each subset of sample explanations indicate feature attributions with respect to each subset's class label.
We then evaluate the \abbr{} while iteratively including additional subsets of explanations.

Experiment results are shown in Figure \ref{fig:clustering_experiment}. 
As expected, \abbr{} generally decreases as the number of explanation classes increases, indicating a higher diversity of explanations. 
Notably, this effect is particularly evident for explainers that initially have lower WG score. 
For example, Grad, GBP, and DS exhibit low WG for a single class and also have high sensitivity (i.e., large decrease) as additional classes are added. 
In contrast, SAGE (\abbr{} $= 1.0$), maintains the same level of globalness regardless of the number of  explanation classes.
Explainers with high globalness (e.g. SBP$_{10}$) also exhibit minimal decrease in globalness as explanation diversity increases. 
These results indicate that \abbr{} is able to capture explanation diversity.

\begin{figure*}[t]
  \centering
  \includegraphics[width=0.94\linewidth]{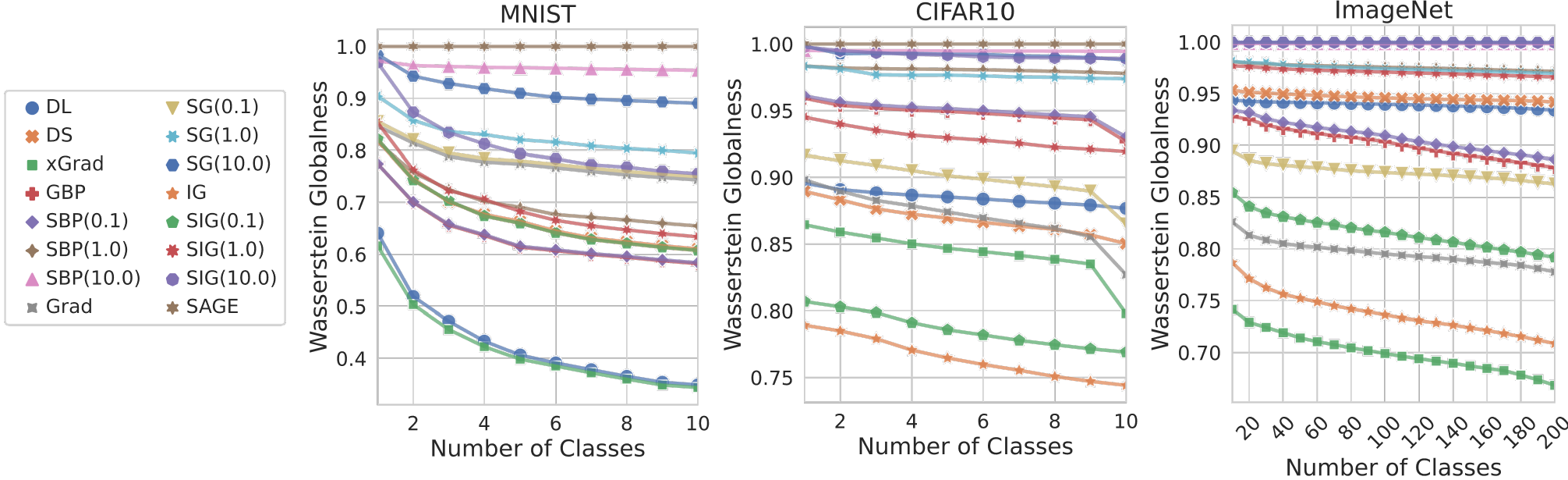}
  \vspace{-3mm}
  \caption{We evaluate whether WG can capture an increasing diversity of explanations for the MNIST, CIFAR10, and ImageNet200 datasets. As we increase the number of label classes in a set of samples, the corresponding explanations become more diverse, resulting in a decreasing WG score. Explainers with higher WG score (e.g. SAGE, SBP(10.0)) are less sensitive to the increasing diversity of explanations from adding additional classes, which is observed from the lower rate of decrease in WG.}
  \label{fig:clustering_experiment}
  % \vspace{-3mm}
\end{figure*}

%%%%%%%%%%%%%%%%%%%%%%%%%%%%%%%%%%%%%%%%%%%%%%
% \vspace{-3mm}
\subsection{Additional Results} \label{sec:additional_experiments}
\vspace{-1mm}
In the supplement, we include a time complexity evaluation of \abbr{} (App. \ref{app:time}), ablation study (App. \ref{app:ablation}), extended results on ImageNet 1k (App. \ref{app:imagenet1k}), examples of isometries for feature attribution and selection (App. \ref{app:additional_isometry_figure}), and qualitative examples from image datasets (App. \ref{app:qualitative_images}).
% \vspace{-3mm}
\section{CONCLUSION} \label{sec:conclusion}
\vspace{-2mm}
In this work, we introduce \emph{globalness}, an explainer complexity metric designed to differentiate between high-faithfulness explainers by evaluating the diversity of their explanations.
Globalness deepens our understanding of explainers by quantifying their expressiveness with respect to a given dataset and black-box model.
We further introduce \emph{Wasserstein Globalness}, an explainer-agnostic approach for measuring globalness, and demonstrate that it satisfies six desirable axioms.
Regarding limitations, Wasserstein distance approximation can be challenging in high dimensions and is an ongoing area of research.
We use modern approximation methods in our implementation (App. \ref{app:implementation}) and leave efficiency improvements for future work.

\subsubsection*{Acknowledgements}
This work was supported in part by grants NIH 2T32HL007427-41, U01HL089856, and 5R01HL167072 from the National Heart, Lung, and Blood Institute, and grants NIH R01CA240771 and U24CA264369 from the National Cancer Institute. We thank the AISTATS reviewers for their useful comments and feedback.

% All acknowledgments go at the end of the paper, including thanks to reviewers who gave useful comments, to colleagues who contributed to the ideas, and to funding agencies and corporate sponsors that provided financial support. 
% To preserve the anonymity, please include acknowledgments \emph{only} in the camera-ready papers.

% \clearpage
% \subsubsection*{References}
\bibliographystyle{unsrtnat}
\bibliography{dh_references}

\clearpage
%%%%%%%%%%%%%%%%%%%%%%%%%%%%%%%%%%%%%%%%%%%%%%%%%%%%%%%%%%%%
\section*{Checklist}

 \begin{enumerate}

 \item For all models and algorithms presented, check if you include:
 \begin{enumerate}
   \item A clear description of the mathematical setting, assumptions, algorithm, and/or model. [Yes]
   \item An analysis of the properties and complexity (time, space, sample size) of any algorithm. [Yes]
   \item (Optional) Anonymized source code, with specification of all dependencies, including external libraries. [Yes]
 \end{enumerate}

 \item For any theoretical claim, check if you include:
 \begin{enumerate}
   \item Statements of the full set of assumptions of all theoretical results. [Yes]
   \item Complete proofs of all theoretical results. [Yes]
   \item Clear explanations of any assumptions. [Yes]   
 \end{enumerate}

 \item For all figures and tables that present empirical results, check if you include:
 \begin{enumerate}
   \item The code, data, and instructions needed to reproduce the main experimental results (either in the supplemental material or as a URL). [Yes]
   \item All the training details (e.g., data splits, hyperparameters, how they were chosen). [Yes]
         \item A clear definition of the specific measure or statistics and error bars (e.g., with respect to the random seed after running experiments multiple times). [Yes]
         \item A description of the computing infrastructure used. (e.g., type of GPUs, internal cluster, or cloud provider). [Yes]
 \end{enumerate}

 \item If you are using existing assets (e.g., code, data, models) or curating/releasing new assets, check if you include:
 \begin{enumerate}
   \item Citations of the creator If your work uses existing assets. [Yes]
   \item The license information of the assets, if applicable. [Not Applicable]
   \item New assets either in the supplemental material or as a URL, if applicable. [Not Applicable]
   \item Information about consent from data providers/curators. [Not Applicable]
   \item Discussion of sensible content if applicable, e.g., personally identifiable information or offensive content. [Not Applicable]
 \end{enumerate}

 \item If you used crowdsourcing or conducted research with human subjects, check if you include:
 \begin{enumerate}
   \item The full text of instructions given to participants and screenshots. [Not Applicable]
   \item Descriptions of potential participant risks, with links to Institutional Review Board (IRB) approvals if applicable. [Not Applicable]
   \item The estimated hourly wage paid to participants and the total amount spent on participant compensation. [Not Applicable]
 \end{enumerate}

 \end{enumerate}

\clearpage
\appendix
\onecolumn

\section{Background} \label{app:background}

\subsection{Broader Impact Statement} \label{app:broader_impact}
Machine learning models are increasingly used in variety of applications, such as healthcare \citep{rasheed2022explainable,deep_learning_utilizing_suboptimal_spirometry, torop2020deep}, financial services \citep{bucker2022transparency,bracke2019machine}, criminal justice \citep{dressel2018accuracy}, or game design \citep{summerville2018procedural, bazzaz2024guidedgamelevelrepair, bazzaz2024controllablegamelevelgeneration}. In these applications, it is often important for users to understand how predictions generated in order to ensure transparency, trust, and informed decision-making.
Explainability is one way in which we can begin to build trustworthy models. It is crucial that we seek a better understanding of opaque models to ensure their safe deployment in real-world scenarios, whether by constructing inherently interpretable models or by applying post-hoc explanation methods. In this paper, we discuss a theoretical property of explainability methods, and present a technique for measuring this quantity in practice. We intend for our proposed globalness measure to facilitate responsible practices by improving prediction models and their interpretability methods.

\subsection{Additional Related Works}
In this section, we survey previous works that reference different aspects of what we term ``globalness''. While many methods exhibit varying degrees of globalness, there have been no previous attempts at quantification.
To the best of our knowledge, our work is the first to define and formalize a continuous measure of explanation globalness.

Various instance-wise explainers (e.g. \citet{chen2018learning, ribeiroLIME, shap_lundberg}) have been investigated with respect to the diversity of their explanations.
\citet{anchors} propose Anchors, which evaluates the ``coverage'' of  individual explanations and how well they generalize to nearby samples.
Alternatively, some methods explain subsets of samples by averaging instance-wise explanations \citep{invase, masoomi2022explanations} or through clustering \citep{lundbergLocalExplanationsGlobal2020a, why_buy_insurance_gramegna}.
The explainer LIME \citep{ribeiroLIME} has a bandwidth parameter which has been related to the local neighborhood of each explanation \citep{sLIME, visani2020optilime}.
Other methods use similar smoothing operations that average nearby explanations \citep{smoothgrad, torop2023smoothhess}.
\citet{plumb2018model} uses a surrogate tree model to identify interpretable global patterns as well as individual explanations.

\subsection{Summary of Notation}

Table \ref{tab:notation} provides a summary of the notation and terminology used in the paper.

\begin{table}[h!]
    \centering
    \begin{tabular}{| c | p{0.6\linewidth} |} 
     \hline
     \textbf{Symbol} & \textbf{Name} \\ [0.5ex] 
     \hline
      $G_p$ & p-Wasserstein Globalness (Eq. \ref{eq:Wasserstein_def}).  \\ 
     \hline
      $\hat G_p$ & Empirical p-Wasserstein Globalness (Eq. \ref{eq:discrete_wasserstein}).  \\ 
     \hline
     $\phi_{\#}$ & Push-forward Measure.  \\ 
     \hline
     $\mathds{1}$ & Indicator Function. \\
     \hline
     $\delta_x$ & Dirac distribution (centered at x). \\ 
     \hline
     $\mathbb{E}$ & Expected Value. \\
     \hline
     $\explanationSpace$ & Set of Explanations. \\
     \hline
     $\inputspace$ & Space of model inputs. \\
     \hline
     $\explainer$ & Explainer. \\
     \hline
     $\distnset$ & Space of probability measures over $\mathcal{E}$.\\
     \hline
     $\Pi(P,Q)$ &  Set of joint distributions which marginalize to $P$ and $Q$. \\
     \hline
     $\langle \cdot, \cdot \rangle_F$ & Frobenius Inner Product. \\ [1ex] 
     \hline
     $d_{W}^p$ & p-Wasserstein distance. \\
     \hline
     $d_{\explanationSpace}$ & distance metric for the space $\mathcal{E}$. \\
     \hline
     $\refdist$ & Baseline uniform distribution for the space $\mathcal{E}$.\\
     \hline
     $u_\mathcal{E}$ & Probability density/mass function for $\refdist$.\\
     \hline
     $\eta_\mathcal{E}$ & Borel measurable mapping for $\mu$; represents a translation to center $\mu$ for $\mathcal{E}=\mathbb{R}^s$ (feature attribution). In the feature selection case, it is set to be an identity mapping. \\
     \hline
     supp$(\zeta)$ & The support of a given probability measure $\zeta$.\\
     \hline
     $\mathfrak{S}_s$ & The symmetric group on $\{1,\ldots,s\}$. \\
     \hline
     % mm-space & metric-measure space \\
     % \hline
    \end{tabular}
    \caption{Summary of notation used in the paper}
    \label{tab:notation}
\end{table}

%%%%%%%%%%%%%%%%%%%%%%%%%%%
\subsection{Approximating Wasserstein Globalness in High-Dimensions} \label{app:implementation}

Computing the Wasserstein distance traditionally involves solving a linear programming problem which does not scale well. In the previous section, we discussed the sample complexity of $\hat{G}_p(\mu_N)$. This empirical estimate also scales poorly with the dimension of the data. However, several techniques have been developed to efficiently compute Wasserstein distances in high dimensions. One such technique, called entropy-regularized optimal transport, relies on regularizing the entropy of the joint distribution $\pi \in \Pi(P,Q)$. 

The Wasserstein distance can be written as an optimization over the set of product measures $\Pi(P,Q)$. In its discrete form, the expectation $\mathbb{E}_{(x,y) \sim \pi}[d_{\explanationSpace}(x,y)]$ from Equation \ref{eq:Wasserstein_def} can be written as a Frobenius inner product between the matrix of distances $D$, and the transport plan $R$.

\begin{equation} \label{eq:orig_opt}
    d_W^1(p,q) = \min_{R \in \Pi(p,q)} \langle R,D \rangle_F
\end{equation}

The \emph{sinkhorn distance}, introduced to the ML community by \citet{sinkhorn}, utilizes a technique called entropic regularization. The entropy-regularized transport problem adds an additional regularization term $\Omega(R) = \sum_{i,j} r_{i,j}log(r_{i,j})$. 

\begin{equation} \label{eq:entropy_regularization}
    d_{\textrm{sinkhorn}}(p,q) = \min_{R \in \Pi(p,q)} \langle R,D \rangle_F + \lambda \Omega(R)
\end{equation}

This encourages the transport plan to have less mutual information with the marginals $p$ and $q$. By searching just over these smooth transport plans, the original optimization problem becomes convex and enables the use matrix scaling algorithms \footnote{We implement the sinkhorn algorithm using available optimal transport solvers available at \citet{pot}.}. There are also existing theoretical guarantees that $d_{sinkhorn}$ will be close to $d_W$, for which we refer the reader to \citet{sinkhorn}. 
By utilizing entropic regularization, we can find good solutions faster.  However, we can decrease the strength of this regularization to reduce noise from the computation. As discussed, this is especially useful for applications on high-dimensional data.

Alternatively, Sliced Wasserstein Distance (SWD) \citep{bonneel2015sliced} approximates Wasserstein Distance by projecting each measure $p,q$ onto random 1-dimensional unit vectors (i.e. slices), calculating the 1-d p-Wasserstein Distance on each slice, then taking an expectation over all slices. One-dimensional p-Wasserstein distance is relatively efficient to calculate and can be rewritten in terms of Cumulative Distribution Functions (CDF) of $p,q$, leading to an simplified expression for SWD. 
\begin{equation} \label{eq:sliced_wasserstein_distance}
    d_{\textrm{SWD}}(p,q) = \left(\int_0^1 d\left(\inv{F_p(x)}, \inv{F_q(x)}\right)^p d x\right)^{\frac{1}{p}}
\end{equation}
where $F_p$ and $F_q$ are the CDFs for $p$ and $q$, respectively.
The integral in \eqref{eq:sliced_wasserstein_distance} is commonly approximated using Monte Carlo methods, by averaging a finite sample of random vectors. In our experiments we use the implementation of SWD provided by \citep{pot}.

While both Sinkhorn Distance and SWD can be used for calculating Wasserstein Globalness for both feature attribution and feature selection, the SWD approximation is particularly efficient for the feature attribution case. This is because the baseline uniform distribution for attribution $U_A$ is defined as a uniform distribution over a $s$-dimensional ball of radius $k$. 
Due to the symmetry of $U_A$, the projection onto each slice is symmetric for all slices, enabling efficient approximation of the global measure used for normalization. 
Enforcing symmetric projections also improves invariance to isometric transformations in practice.

\section{Theory and Method Details} \label{app:proofs}
\subsection{Proof of Theorem \ref{theorem:props_satisfied}} \label{pf:props_satisfied}
We will now restate and prove that the axioms from Section \ref{sec:axioms} apply to the definition of $G$ in Definition \ref{def:WasG}. For clarity, we omit the $\mathcal{E}$ in $U_{\mathcal{E}}$, however this still refers to either uniform measure from the feature attribution or feature selection case.

\textbf{\ref{prop:non-negativity}} states that $G_p(\mu) \geq 0 \ \forall \mu$.

\begin{proof}
Since $d_W$ is a distance metric, it is non-negative by construction. Therefore $G_p(P) = d_W(P,U)$ is non-negative for any $P,U \in \distnset$. \qed

\textbf{\ref{prop:continuity}} Let $\{\mu^{(n)}\}$ be a sequence of probability measures which converges weakly to $\mu$. Then $G_p(\mu^{(n)}) \rightarrow G_p(\mu)$. 

\textit{Proof.} 

For clarity, we write the push-forward measure $\centermu$ as $\centermuclean$, and the push-forward measure $\centermu^{(n)}$ as $\centermuclean^{(n)}$.

By remark 2.4 in \citep{memoli},
\begin{equation}
    |d_W^p(A,B) - d_W^p(A^{(n)}, B^{(n)})| \leq d_W^p(A, A^{(n)}) + d_W^p(B, B^{(n)})
\end{equation}
This implies that 
\begin{equation}
    |d_W^p(\centermuclean, U) - d_W^p(\centermuclean^{(n)}, U^{(n)})| \leq d_W^p(\centermuclean, \centermuclean^{(n)}) + d_W^p(U, U^{(n)})
\end{equation}

However, $U$ does not need to be approximated, therefore 
\begin{align*}
    U = U^{(n)} \implies |G_p(\mu) - G_p(\mu^{(n)})| &= |d_W^p(\centermuclean, U) - d_W^p(\centermuclean^{(n)}, U)| \\
    &\leq d_W^p(\centermuclean, \centermuclean^{(n)})
\end{align*}

But $\mu^{(n)}$ converges weakly to $\mu$, meaning $d_W^p(\centermuclean, \centermuclean^{(n)}) \rightarrow 0 \implies |G_p(\mu) - G_p(\mu^{(n)})| \rightarrow 0$
\end{proof}

From this proof we can see that, not only does $G_p(\mu^{(n)})$ converge to $G_p(\mu)$, but it converges at least as quickly as $\mu^{(n)} \rightarrow \mu$. 

\textbf{\ref{prop:global}} states that there exists an $x_0 \in \explanationSpace$ for which $G_p(\delta_{x_0}) \geq G_p(\mu) \ \forall \mu \in P(\explanationSpace)$.
  Furthermore, we will also show that, for this $x_0$, $s_{x_0,p} \leq s_{x,p} \ \forall x \in \explanationSpace$, where $s_{x,p}$ is the \textit{p-eccentricity}, defined in \citet{memoli} as $s_{x,p} = \int_X d^p(x, x')\; dx'$ \footnote{P-eccentricity indicates how far a point $x$ is to the other points in $\explanationSpace$.}.
  
P-eccentricity appears in this property in order to characterize the $x_0$ for which $\delta_{x_0}$ achieves maximum globalness.

\begin{proof}. 
Without loss of generality, assume $s_{x_0, p} \geq s_{x, p}, \ \forall x \in \explanationSpace$.
Any $\textbf{discrete}$ distribution can be represented as a delta-train $P = \sum_i \lambda_i \delta(x_i)$ such that $\lambda_i \geq 0, \ \forall i$ and $\sum_i \lambda_i = 1$. 
By \ref{prop:convexity}, we know that $G_p(P) \leq \sum_i \lambda_i G_p(\delta(x_i))$. But for a Dirac measure, \ref{eq:Wasserstein_def} reduces to 
\begin{equation} \label{eq:delta_globalness}
    G_p(\delta_{x_0}) = \left( \int_{X} d^p(x_0, x) \; dx \right)^{1/p} = s_{x_0, p}
\end{equation}

So 
\begin{equation} G_p(P) \leq \sum_i \lambda_i s_{x_0, p} \leq \max_i s_{x_i, p} = s_{x_0, p}.
\end{equation}

This proves the axiom for any discrete distribution. But general distributions can be approximated with these weighted delta-trains, so by \ref{prop:continuity}, so in the limit, this proof holds for general distributions.
\end{proof}

\textbf{\ref{prop:local}} states that $\centermu = \refdist \iff G(\mu) = 0$
\begin{proof} First note that $d_W^p$ is a metric. Therefore it follows that $d_W^p(P, Q) = 0 \iff P = Q$ for any distributions $P,Q$. This directly implies that $d_W(\centermu, 
\refdist) = 0 \iff \centermu = \refdist$. 
\end{proof}

We will now prove the \textbf{first part of \ref{prop:isometry_invariance}}, restated below:

Let $T_{\explanationSpace, d_{\explanationSpace}}$ be the group of isometries of $(\explanationSpace, d_{\explanationSpace})$ defined in \ref{prop:isometry_invariance}. Then $G_p$ is $T_{\explanationSpace, d}$-invariant. 

\begin{proof}
We need to show that $G_p(\mu) = G_p(\phi_\# \mu), \ \forall \phi \in T_{\explanationSpace, d}$.   

First, consider the set of translation isometries $R_{\mathcal{E},d} \subset T_{\explanationSpace, d}$.
Note that due to the centering function $\eta_\mathcal{E}$, $\phi _\#(\centermu) = \centermu$ $\forall \phi \in R_{\mathcal{E},d}$. Therefore:
\begin{equation}  \label{eq:prop6pt1}
    G_p(\mu) = d_W^p\left(\phi _\#(\centermu), \refdist\right)= d_W^p\left(\centermu, \refdist\right) = G_p(\phi_\# \mu), \ \forall \phi \in R_{\mathcal{E},d}
\end{equation}

Next, we consider the set of isometries $T_{\explanationSpace, d} \setminus R_{\mathcal{E},d}$.
Let $P_q$ be a probability measure with density $q$. One consequence of the definition of push-forward measure is that \begin{equation} \label{eq:pushforward_expectation}
P_{\phi_{\#} q}(\phi x) = P_q (\phi^{-1} \phi x) = P_q (x) \implies \mathbb{E}_{x \sim \phi_\# q}[\inv{\phi} x] = \mathbb{E}_{x \sim q}[x]
\end{equation}

Consider the set of measure couplings (feasible transport plans) between $\mu$ and $v$, denoted $\Pi(\mu, v)$. Let $h_\phi : \Pi(\mu, v) \rightarrow \Pi(\phi_\# \mu, \phi_\# v)$ be a bijection such that $(h_\phi \circ \pi)(A,B) = \pi(\phi^{-1} A , \phi^{-1} B)$.
\begin{equation}G_p(\mu) = \left[\underset{\pi \in \Pi(\centermu, U)}{\inf} \mathbb{E}_{(x,y) \sim \pi}[d_{\mathcal{E}}(x, y)^p]\right]^{1/p}
\end{equation}

From \eqref{eq:pushforward_expectation}, $\mathbb{E}_{x \sim \pi}[x] = \mathbb{E}_{x \sim h_\phi(\pi)}[\phi(x)]$, it follows:

\begin{equation}= \left[\underset{\pi \in \Pi(\centermu, U)}{\inf} \mathbb{E}_{(x,y) \sim h_\phi(\pi)}[d^p(\inv{\phi} x, \inv{\phi} y)]\right]^{1/p}
\end{equation}

Since $\phi$ is an isometry, $d_{\explanationSpace}(x,y) = d_{\explanationSpace}(\inv{\phi} x, \inv{\phi}y)$:

\begin{equation}= \left[\underset{\pi \in \Pi(\centermu, U)}{\inf} \mathbb{E}_{(x,y) \sim h_\phi(\pi)}[d_{\mathcal{E}}(x, y)^p]\right]^{1/p}\end{equation}

Finally, since $h_\phi$ is a bijection between $\Pi(\centermu, U)$ and $\Pi(\phi_\# \mu, \phi_\#U)$,
\begin{equation}
    = \left[\underset{\pi \in \Pi(\phi_\# (\centermu), \phi_\#U)}{\inf} \mathbb{E}_{(x,y) \sim \pi}[d_{\mathcal{E}}(x, y)^p]\right]^{1/p}
\end{equation}

Note that $\phi_\#U = U$ for any non-translation isometry $\phi \in T_{\explanationSpace, d} \setminus R_{\mathcal{E},d}$ due to symmetry.

\begin{equation} \label{eq:prop6pt2}
    = G_p(\phi_\# \mu)  \quad \forall \phi \in T_{\explanationSpace, d} \setminus R_{\mathcal{E},d}
\end{equation}

Combining \eqref{eq:prop6pt1} and \eqref{eq:prop6pt2}, we obtain the desired result.

\end{proof}

We will now prove the \textbf{second part of \ref{prop:isometry_invariance}}, which states that $\exists \phi : \explanationSpace \rightarrow \explanationSpace \not \in T_{\explanationSpace, d_{\explanationSpace}}$ for which $G_p(\phi_\# \mu) \neq G_p(\mu)$, for some $\mu$.  
\begin{proof}

Consider a $\phi \in S$, $\phi : \explanationSpace \rightarrow \explanationSpace$ such that $d_{\explanationSpace}(x,y) = c * d_{\explanationSpace}(\phi x, \phi y)$ for some constant $c$, and some $x,y \in \explanationSpace$. Note that $\phi \notin T_{\explanationSpace, d_{\explanationSpace}}$, the group of isometries of $(\explanationSpace, d_{\explanationSpace})$.

Like before, we can write:

\begin{equation}
G_p(\mu) = \left[\underset{\pi \in \Pi(\centermu, U)}{\inf} \mathbb{E}_{(x,y) \sim \pi}[d_{\mathcal{E}}(x, y)^p]\right]^{1/p}
\end{equation}

% align
\begin{equation}
= \left[\underset{\pi \in \Pi(\phi_\# (\centermu), \phi_\# U)}{\inf} \mathbb{E}_{(x,y) \sim \pi}[d_{\mathcal{E}}(x, y)^p]\right]^{1/p}
\end{equation}

But since $\phi$ is not isometric: 

\begin{equation}
= \left[\underset{\pi \in \Pi(\phi_\# (\centermu), \phi_\# U)}{\inf} \mathbb{E}_{(x,y) \sim \pi}[c^p d_{\mathcal{E}}(x, y)^p]\right]^{1/p}
\end{equation}

\begin{equation}
\neq G_p(\phi_\# \mu)
\end{equation}

\end{proof}

\textbf{\ref{prop:convexity}} (convexity) Let $R$ be a convex combination of measures $P,Q$ for some $\lambda \in [0,1]$. Then $G(R) \leq \lambda G(P) + (1 - \lambda) G(P)$.

\begin{proof} Let $P,Q \in \distnset$. Let $\psi_P$ be the optimal transport plan between $P$ and $U$. Similarly, let $\psi_Q$ be the optimal transport plan between $Q$ and $U$. Then the convex combination of $P$ and $Q$ is a feasible transport plan between $R = \lambda P + (1 - \lambda) Q$ and $U$.

\begin{equation}
G_p(R) = \left[\underset{\pi \in \Pi(\centermu, U)}{\inf} \mathbb{E}_{(x,y) \sim \pi}[d_{\mathcal{E}}(x,y)^p]\right]^{1/p}
\end{equation}

\begin{equation}
\leq \left[\mathbb{E}_{(x,y) \sim \lambda \psi_P + (1 - \lambda) \psi_Q}[d_{\mathcal{E}}(x,y)^p]\right]^{1/p}
\end{equation}

\begin{equation}
= \lambda \left[\mathbb{E}_{(x,y) \sim \psi_P}[d_{\mathcal{E}}(x,y)^p]\right]^{1/p}  + (1 - \lambda) \left[\mathbb{E}_{(x,y) \sim \psi_Q}[d_{\mathcal{E}}(x,y)^p]\right]^{1/p}
\end{equation}

\begin{equation}
= \lambda G_p(P) + (1 - \lambda) G_p(Q)
\end{equation}

\end{proof}

%%%%%%%%%%%%%%%%%%%%%%%%%%%%%%%%%%%%%%%%%%%%%%%%%%%%%%%%%%%%%%%%
\subsection{Proof: Sample Complexity (Theorem \ref{theorem:approximation_error})} \label{app:proof_sample_complexity}

We will first introduce two lemmas to simplify the proof of theorem \ref{theorem:approximation_error}. Lemma \ref{lemma:mu_approx} concerns the error incurred by approximating $\mu$ with finite samples. Lemma \ref{lemma:empirical_convergence} concerns the error incurred by approximating $\refdist$ with finite samples. By bounding both of these errors, we obtain an upper bound on the error of using both finite-sample approximations.

For clarity, we write the push-forward measure $\centermu$ as $\centermuclean$, and the push-forward measure $\centermu^{(N)}$ as $\centermuclean^{(N)}$.

\begin{lemma}\label{lemma:mu_approx}
Let $p \in (0, d/2)$. Also, let $q \neq d/(d - p)$.
Let $M_q(\mu) = \int_{\mathbb{R}^d} |x|^q d\mu(x)$. Then for some constant $C_{p,q,d}$ which depends only on p,q, and d,

\begin{equation}
\mathbb{E}\left[|G_p(\mu) - G_p(\mu^{(N)})|\right] \leq  C_{p,q,d} M_q^{p/q}(\mu)(N^{-p/d} + N^{-(q-p)/q})
\end{equation}
\end{lemma}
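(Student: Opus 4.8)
\textbf{Proof plan for Lemma \ref{lemma:mu_approx}.}

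The plan is to reduce the statement to a known non-asymptotic rate for the empirical Wasserstein distance and then use the triangle inequality. First I would observe that, since $G_p(\mu) = d_W^p(\centermuclean,\refdist)$ and $G_p(\mu^{(N)})$ is defined using the empirical measure $\centermuclean^{(N)}$ in place of $\centermuclean$ (with $\refdist$ held fixed, exactly as in the proof of \ref{prop:continuity}), the reverse triangle inequality for the metric $d_W^p$ gives
\begin{equation}
|G_p(\mu) - G_p(\mu^{(N)})| = |d_W^p(\centermuclean,\refdist) - d_W^p(\centermuclean^{(N)},\refdist)| \leq d_W^p(\centermuclean,\centermuclean^{(N)}).
\end{equation}
Taking expectations over the sampling of the $N$ explanations, it then suffices to bound $\mathbb{E}[d_W^p(\centermuclean,\centermuclean^{(N)})]$.

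Next I would invoke an off-the-shelf quantitative bound on the expected empirical Wasserstein distance in $\mathbb{R}^d$ — the natural reference here is Fournier--Guillin (the moment-based, dimension-dependent rates), which for $p \in (0,d/2)$ and a moment order $q \neq d/(d-p)$ with $q$ large enough bounds $\mathbb{E}[d_W^p(\nu,\nu^{(N)})^p]$ by a constant $C_{p,q,d}$ times $M_q^{p/q}(\nu)$ times $(N^{-p/d} + N^{-(q-p)/q})$. Applying this with $\nu = \centermuclean$ and using that $d_W^p$ itself (not its $p$-th power) is what appears, together with Jensen's inequality to pass the expectation inside where needed, yields
\begin{equation}
\mathbb{E}\left[d_W^p(\centermuclean,\centermuclean^{(N)})\right] \leq C_{p,q,d}\, M_q^{p/q}(\centermuclean)\left(N^{-p/d} + N^{-(q-p)/q}\right).
\end{equation}
A small bookkeeping point is that the moment $M_q$ in the statement is written for $\mu$, while the relevant quantity is $M_q(\centermuclean)$; since $\eta_{\mathcal{E}}$ is a translation (or the identity) and the support of $\mu$ is bounded, these differ only by a constant absorbed into $C_{p,q,d}$, so I would note this equivalence rather than belabor it. Combining the two displays gives the claimed inequality.

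The main obstacle is matching hypotheses and constants to the exact form of the empirical-convergence theorem being cited: the Fournier--Guillin bounds are usually stated for $\mathbb{E}[d_W^p(\cdot,\cdot)^p]$ rather than $\mathbb{E}[d_W^p(\cdot,\cdot)]$, and the constraints ``$p \in (0,d/2)$, $q \neq d/(d-p)$, $q \geq \tfrac{dp}{d-p}$'' must be checked to be precisely those under which the cited rate holds with the two error terms $N^{-p/d}$ and $N^{-(q-p)/q}$. One must also be careful that $d$ here is the dimension of the explanation space (the paper writes $\mathbb{R}^d$ in $M_q$, though elsewhere explanations live in $\mathbb{R}^s$), so I would fix that notational ambiguity at the start. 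Everything else — the reverse triangle inequality, boundedness of support ensuring finite moments, the translation-invariance of $M_q$ up to constants — is routine.
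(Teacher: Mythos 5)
Your proposal is correct and follows essentially the same route as the paper's proof: a (reverse) triangle inequality in $d_W^p$ with $\refdist$ held fixed to reduce everything to $\mathbb{E}[d_W^p(\centermuclean,\centermuclean^{(N)})]$, followed by the Fournier--Guillin moment-based empirical rate. Your version is in fact slightly tidier, since taking the absolute value before the expectation avoids the paper's two separate one-sided bounds, and you flag the $M_q(\mu)$ versus $M_q(\centermuclean)$ and $\mathbb{E}[W_p^p]$ versus $\mathbb{E}[W_p]$ bookkeeping that the paper passes over silently.
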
 

\begin{proof}

From the triangle inequality,
\begin{equation}
    d_W^p(\centermuclean^{(N)}, U) \leq d_W^p(\centermuclean, U) + d_W^p(\centermuclean^{(N)}, \centermuclean)
\end{equation}

From \citep{fournier}, $\mathbb{E}[d_W^p(\centermuclean^{(N)}, \centermuclean)] \leq C_{p,q,d} M_q^{p/q}(\mu)(N^{-p/d} + N^{-(q-p)/q})$.

Therefore: 
\begin{equation}\label{mu_approx_lem1}
\mathbb{E}[G_p(\mu^{(N)})] - G_p(\mu) \leq  C_{p,q,d} M_q^{p/q}(\mu)(N^{-p/d} + N^{-(q-p)/q})
\end{equation}

Similarly, we can invoke the triangle inequality with $d_W^p(\mu, \refdist)$ on the LHS:
\begin{equation}
d_W^p(\centermuclean, \refdist) \leq d_W^p(\centermuclean^{(N)}, \refdist) + d_W^p(\centermuclean^{(N)}, \centermuclean)
\end{equation}

And, making the same argument, we get:

\begin{equation}\label{mu_approx_lem2}
G_p(\mu) - \mathbb{E}[G_p(\mu^{(N)})] \leq  C_{p,q,d} M_q^{p/q}(\mu)(N^{-p/d} + N^{-(q-p)/q})
\end{equation}

Finally, combining \ref{mu_approx_lem1} and \ref{mu_approx_lem2},
$$\mathbb{E}\left[|G_p(\mu) - G_p(\mu^{(N)})|\right] \leq  C_{p,q,d} M_q^{p/q}(\mu)(N^{-p/d} + N^{-(q-p)/q})$$
\end{proof}

This theorem is related to \ref{prop:continuity} in the sense that the empirical distribution $\mu^{(N)}$ converges to $\mu$. This theorem is concerned with the \textit{rate} at which $G_p(\mu^{(N)})$ converges, whereas \ref{prop:continuity} simply states that it will converge.

Perhaps the most important consequence of \ref{lemma:mu_approx} is that $\mathbb{E}\left[|G_p(\mu) - G_p(\mu^{(N)})|\right] \rightarrow 0$ as $N \rightarrow \infty$. 

\begin{lemma}\label{lemma:empirical_convergence}
Let $p \in (0, \frac{d}{2})$, and $q > \frac{dp}{d-p}$. Then for some constant $\kappa_{p,q}$ which depends only on p and q,

\begin{equation}
    \mathbb{E}\left[|\hat{G}_p(\mu^{(N)}) - G_p(\mu^{(N)})|\right] \leq \kappa_{p,q} \left[ \int_{\mathbb{R}^d} \|x\|^q d\mu(x) \right] ^{1/q} N^{-1/d}
\end{equation}
\end{lemma}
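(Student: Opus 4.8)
The plan is to observe that $\hat{G}_p(\mu^{(N)}) = d_W^p(\centermuclean^{(N)}, \refdistemp)$ and $G_p(\mu^{(N)}) = d_W^p(\centermuclean^{(N)}, \refdist)$ are $p$-Wasserstein distances from the \emph{same} source measure $\centermuclean^{(N)}$, so that their difference is controlled by the distance between the two reference measures, and then to invoke the Fournier--Guillin convergence rate for empirical measures. Concretely, the metric inequality already used in the proof of \ref{prop:continuity} (remark~2.4 in \citep{memoli}), applied with $A = A^{(n)} = \centermuclean^{(N)}$, $B = \refdist$, and $B^{(n)} = \refdistemp$, gives $|\hat{G}_p(\mu^{(N)}) - G_p(\mu^{(N)})| \le d_W^p(\refdistemp, \refdist)$. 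Since the right-hand side depends only on the $N$ i.i.d.\ draws from $\refdist$, taking expectations yields $\mathbb{E}\big[|\hat{G}_p(\mu^{(N)}) - G_p(\mu^{(N)})|\big] \le \mathbb{E}\big[d_W^p(\refdistemp, \refdist)\big]$, so it remains only to bound how fast the empirical measure of $\refdist$ converges to $\refdist$ in $p$-Wasserstein distance.

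For this I would use the Fournier--Guillin bound \citep{fournier} (the same tool as in Lemma~\ref{lemma:mu_approx}): for $p \in (0, d/2)$ and $q > p$, $\mathbb{E}\big[d_W^p(\refdistemp, \refdist)^p\big] \le C_{p,q,d}\, M_q^{p/q}(\refdist)\,(N^{-p/d} + N^{-(q-p)/q})$, which applies because $q > \tfrac{dp}{d-p} > p$. Since $d_W^p$ carries an outer $1/p$ exponent, Jensen's inequality gives $\mathbb{E}[d_W^p(\refdistemp, \refdist)] \le \big(\mathbb{E}[d_W^p(\refdistemp, \refdist)^p]\big)^{1/p} \le C_{p,q,d}^{1/p}\, M_q(\refdist)^{1/q}\,(N^{-p/d} + N^{-(q-p)/q})^{1/p}$. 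The hypothesis $q > \tfrac{dp}{d-p}$ is exactly what collapses the $N$-rate: it is equivalent to $\tfrac{q-p}{q} \ge \tfrac{p}{d}$, so $N^{-(q-p)/q} \le N^{-p/d}$ and $(N^{-p/d} + N^{-(q-p)/q})^{1/p} \le (2N^{-p/d})^{1/p} = 2^{1/p} N^{-1/d}$, leaving $\mathbb{E}[d_W^p(\refdistemp, \refdist)] \le \kappa_{p,q,d}\, M_q(\refdist)^{1/q}\, N^{-1/d}$. Finally, because the radius $k$ defining $\refdist$ is fixed so that $\mathrm{supp}(\centermu) \subseteq \mathrm{supp}(\refdist)$, the moment $M_q(\refdist)^{1/q}$ is controlled in terms of $\big(\int_{\mathbb{R}^d}\|x\|^q\, d\mu(x)\big)^{1/q}$; absorbing the dimension-dependent constants into $\kappa_{p,q}$ then gives the stated bound.

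The reduction step is immediate from the already-invoked metric inequality, so the substance lies in the second step. The delicate points there are: (i) applying Jensen's inequality in the correct direction --- straightforward for $p \ge 1$, whereas for $p < 1$ one instead works directly with $d_W^p(\cdot,\cdot)^p$, which is itself a metric and for which the triangle-type inequality and Fournier--Guillin hold without a root; and (ii) the exponent bookkeeping verifying that the two-term Fournier--Guillin rate genuinely reduces to the single clean factor $N^{-1/d}$ precisely under $p \in (0, d/2)$ and $q > \tfrac{dp}{d-p}$. Carrying the moment constant through this collapse so that the bound reads in terms of $\big(\int \|x\|^q\, d\mu\big)^{1/q}$, rather than the raw ball radius $k$, is the one place where the specific choice of reference measure $\refdist$ enters.
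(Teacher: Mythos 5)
Your proof is correct and uses the same core decomposition as the paper: two applications of the triangle inequality (which you package as the metric inequality from remark~2.4 of \citealp{memoli}) reduce the error to $\mathbb{E}\left[d_W^p(\refdistemp,\refdist)\right]$, the empirical convergence rate of the reference measure alone. Where you diverge is in how that rate is obtained. The paper appeals directly to \citet{dereich} for the bound $\mathbb{E}[d_W^p(U_{\mathcal{E}}^{(N)},U_{\mathcal{E}})] \leq \kappa_{p,q}\, M_q^{1/q}\, N^{-1/d}$ and stops there; you instead reuse the Fournier--Guillin estimate already invoked in Lemma~\ref{lemma:mu_approx} and show by explicit exponent bookkeeping that the hypothesis $q > \tfrac{dp}{d-p}$ is precisely what collapses the two-term rate $N^{-p/d}+N^{-(q-p)/q}$ into the single factor $N^{-1/d}$ after the $1/p$-th root (your equivalence $q \geq \tfrac{dp}{d-p} \iff \tfrac{q-p}{q} \geq \tfrac{p}{d}$ checks out, and you correctly note the Jensen direction issue for $p<1$). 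Your route is slightly longer but has the advantage of explaining why the lemma's hypotheses take the form they do, which the paper leaves implicit. One point on which both arguments are equally informal: the empirical-convergence bound naturally produces the $q$-th moment of $\refdist$ rather than of $\mu$, and the stated lemma is written in terms of $\int \|x\|^q\, d\mu$; you at least flag that the bounded support of $\refdist$ (radius $k$ chosen so that $\mathrm{supp}(\centermu) \subseteq \mathrm{supp}(\refdist)$) is what permits trading one moment for the other up to a constant, whereas the paper silently writes $d\mu(x)$ inside the Dereich bound without comment.
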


\begin{proof}
From the triangle inequality:
% \begin{equation*}
\begin{align}
&d_W(\centermuclean^{(N)}, U^{(N)}) \leq d_W(\centermuclean^{(N)}, U) + d_W(U^{(N)}, U) \\
\implies &d_W(\centermuclean^{(N)}, U^{(N)}) - d_W(\centermuclean^{(N)}, U) \leq d_W(U^{(N)}, U)
\end{align}

From \citep{dereich}, there exists some constant $\kappa_{p,q}$ such that $\mathbb{E}[d_W^p(U^{(N)}, U)] \leq \kappa_{p,q} \left[ \int_{\mathbb{R}^d} \|x\|^q d\mu(x)\right] ^{1/q} N^{-1/d}$. 

$\implies \mathbb{E}[d_W(\centermuclean^{(N)}, U^{(N)}) - d_W(\centermuclean^{(N)}, U)] \leq \mathbb{E}[d_W^p(U^{(N)}, U)] \leq \kappa_{p,q} \left[ \int_{\mathbb{R}^d} \|x\|^q d\centermuclean(x)\right] ^{1/q} N^{-1/d}$  % need absolute val?

$\implies \mathbb{E}[\hat{G}_p(\centermuclean^{(N)}) - G_p(\centermuclean^{(N)})] \leq \kappa_{p,q} \left[ \int_{\mathbb{R}^d} \|x\|^q d\mu(x)\right] ^{1/q} N^{-1/d}$

Similarly:
\begin{align*}
    &d_W(\centermuclean^{(N)}, U) \leq d_W(\centermuclean^{(N)}, U^{(N)}) + d_W(U^{(N)}, U) \\
\implies &d_W(\centermuclean^{(N)}, U) - d_W(\centermuclean^{(N)}, U^{(N)}) \leq d_W(U^{(N)}, U)
\end{align*}

Therefore:

\begin{equation}
    \mathbb{E}[|\hat{G}_p(\mu^{(N)}) - G_p(\mu^{(N)})|] \leq \kappa_{p,q} \left[ \int_{\mathbb{R}^d} \|x\|^q d\centermuclean(x) \right] ^{1/q} N^{-1/d}
\end{equation}

\end{proof}

We will now combine the previous lemmas to get a final bound on the approximation error incurred by using discrete approximations for both $\mu$ and $U$.

\begin{proof}
The following one-line proof is a natural consequence of Lemma \ref{lemma:mu_approx} and Lemma \ref{lemma:empirical_convergence}.

\begin{align*}
\mathbb{E}[|&\hat{G}_p(\mu^{(N)}) - G_p(\mu)|] \leq \mathbb{E}[|\hat{G}_p(\mu^{(N)}) - G_p(\mu^{(N)})|] + \mathbb{E}\left[|G_p(\mu) - G_p(\mu^{(N)})| \right] \\
&\leq \kappa_{p,q} \left[\int_{\mathbb{R}^d} \|x\|^q d\centermuclean(x) \right] ^{1/q} N^{-1/d} + C_{p,q,d} M_q^{p/q}(\centermuclean)(N^{-p/d} + N^{-(q-p)/q}) \\
&= \mathbb{E}[|\hat{G}_p(\mu^{(N)}) - G_p(\mu)|] \leq \kappa_{p,q} M_q(\centermuclean)^{1/q} N^{-1/d} + C_{p,q,d} M_q^{p/q}(\centermuclean)(N^{-p/d} + N^{-(q-p)/q})
\end{align*}
\end{proof}

%%%%%%%%%%%%%%%%%%%%%%%%%%%%%%%%%%%%%%%%%%%%%%%%%%%%%%%%%%%%%%%%
\subsection{Proof: Entropy and F-Divergence under Feature Permutation} \label{sec:pf_fdiv}
\begin{proof}
Let $\psi \in S$ be a permutation of the set of explanations.

By Equation \ref{eq:pushforward_expectation}, a direct consequence of the definition of push-forward measure, we have that:

\begin{equation}
    \mathbb{E}_{x \sim \mu}[g(\mu(x))] = \mathbb{E}_{x \sim \psi_{\#} \mu}[g(\psi_{\#}\mu(x))]
\end{equation}

for some function $g$.

Now observe that both entropy and f-divergences, as defined in Section \ref{sec:candidates}, can be written in the form above. Entropy is obtained by directly letting $g(x) = -\log(x)$.

\begin{equation}
    H(X) = \mathbb{E}_{x \sim \nu}[-\log(\nu(x))]
\end{equation}

The f-divergence between $\mu$ and $U$ is equivalent by allowing $g(x) = f(\frac{c}{x})$, and allowing the constant $c$ to be the uniform probability of an element $x$, $\nu(x)$.

\begin{equation}
    D(\nu \| u) = \mathbb{E}_{x \sim u}\left[f(\frac{\nu(x)}{u(x)})\right]
\end{equation}
\end{proof}

\subsection{Recommended Distance Metrics and Reference Distributions} \label{sec:recommended_spaces}

\abbr{} allows the user to specify a distance metric $d_\explanationSpace$ and baseline distribution $U_\explanationSpace$ based on the explanation space and intended application.
Selecting $d_\explanationSpace$ allows the user to specify a notion of similarity between explanations, affecting how distributional invariances are captured. The baseline distribution determines the minimally-global explanation distribution.
In the main text (Sec. \ref{sec:axioms}), we assume $\explanationSpace = \mathbb{R}^s$ for attribution and $\explanationSpace = \{0,1\}^s$ for selection, as these are the most widely applicable scenarios. However, in Table \ref{tab:d_examples}, we provide alternative choices for $d_\explanationSpace$ and $U_\explanationSpace$ tailored to common explanation frameworks.

\textbf{Defining bounds on $\bf U_\explanationSpace$.} Note that when $\explanationSpace$ is unbounded (e.g. $\explanationSpace = \mathbb{R}^s$, $\explanationSpace = \mathbb{R}^s_{\geq 0}$) and $U_\explanationSpace$ is selected to be the uniform distribution over $\explanationSpace$, it is necessary to bound the support of $U_\explanationSpace$.
Specifically, given a set of explainers $\{E_1,\ldots,E_m\}$ to be compared, with corresponding explanation distributions $M = \{\mu_{(E_1,F,X)}, \ldots, \mu_{(E_m,F,X)}\}$, it is important to define the bound of $\textrm{supp}(U_\explanationSpace)$ such that $\textrm{supp}(\mu) \subseteq \textrm{supp}(U_\explanationSpace) \quad \forall \mu \in M$.
As described in Section \ref{sec:axioms}, we define a constant $k>0$ determined by $\{E_1,\ldots,E_m\}$: $k = \max_{\zeta \in \textrm{M}} [ \sup_{x \in \textrm{supp}(\zeta)} ||x||_2]$. We detail how $k$ is calculated in practice in Algorithm \ref{alg:k}.

\textbf{Comparing ranked/relative feature attributions.} In some cases, a user may be interested in the relative ranking of feature attribution values rather than the values themselves.
In this case, \abbr{} enables the direct comparison of feature rankings by selecting an appropriate $d_\explanationSpace$ and $U_\explanationSpace$.
Let $\mathfrak{S}_s$ denote the symmetric group on $\{1,\ldots, s\}$, which describes all possible permutations of the feature rankings.
First, we convert each explanation $E(x) \in \mathbb{R}^s$ to a ranking $E(x) \in \mathfrak{S}_s$ (i.e. argsort).
We then define $d_\explanationSpace$ to be a distance metric between rankings, such as Kendall-Tau distance. This definition allows comparison between explanations based \emph{only} on feature attribution ranking.
We then define $U_\explanationSpace$ to be the uniform distribution over $\mathfrak{S}_s$.
Intuitively, this setup enables \abbr{} to quantify the diversity of feature rankings generated by $E$ across the dataset $X$.

% \begin{table}[htbp]
%     % \vspace{-3mm}
%     \centering
%     \small
%     \setlength{\tabcolsep}{6pt}
%     \begin{tabular}{l c c} 
%     \toprule
%      \bf{Framework} & $\bf \explanationSpace$  & \bf{Distance Measure} \\ 
%      \midrule
%       Attribution & $\reals^s$ & $d_{A}(x,y) = ||x - y||_2$\\ 
%       Selection & $\{0,1\}^s$ & $d_{S}(x,y) = \sum_i \mathds{1}(x_i \neq y_i)$ \\ % [1ex] 
%      \bottomrule
%     \end{tabular}
%     % \vspace{-1mm}
%     \caption{Recommended metric spaces for common explanation frameworks.}
%     \label{tab:d_examples}
%     % \vspace{-3mm}
% \end{table}
\begin{table}[h!]
    % \vspace{-3mm}
    \centering
    \small
    \renewcommand{\arraystretch}{1.5}
    \setlength{\tabcolsep}{6pt}
    \begin{tabular}{l l p{4cm} l  L{3.6cm}} 
    \toprule
     \bf{Framework} & $\bf \explanationSpace$  & \bf{Distance Measure} $\bf d_\explanationSpace$ & \bf{Baseline Distribution} $\bf U_\explanationSpace$ & \bf{Description}\\ 
     \midrule
     
      Attribution & $\reals^s$ & $d_{A}(x,y) = ||x - y||_2$ & $u_A(x;k) \propto  \begin{cases}1 & ||x||_2 \leq k \\ 0 & \textrm{otherwise} \end{cases}$ & Default option defined in Section \ref{sec:axioms}. Works for any set of feature attribution explainers.\\

      Attribution & $\mathfrak{S}_s$ & $d_{A}(x,y)$=   $|\{\textrm{Discordant Pairs in } x \times y\}|$ & $u_A(x) \propto  \begin{cases}1 & x \in \explanationSpace \\ 0 & \textrm{otherwise} \end{cases}$ & For applications where only relative feature rank, and not attribution values, is important.\\ 
     
     % \\  
     
      Selection & $\{0,1\}^s$ & $d_{S}(x,y) = \sum_i \mathds{1}(x_i \neq y_i)$ & $u_S(x) \propto  \begin{cases}1 & x \in \explanationSpace \\ 0 & \textrm{otherwise} \end{cases}$ & Default option defined in Section \ref{sec:axioms} for feature selection explainers. \\ 
     \bottomrule
    \end{tabular}
    % \vspace{-1mm}
    \caption{Suggested $d_\explanationSpace$ and $U_\explanationSpace$ options for common explanation frameworks. The term $\mathfrak{S}_s$ denotes the symmetric group of on $\{1,\ldots,s\}$. The term $k > 0$ is a constant to bound the support of $U_\explanationSpace$ for certain $\explanationSpace$; this is calculated per Alg. \ref{alg:k}.}
    \label{tab:d_examples}
    % \vspace{-3mm}
\end{table}

\begin{algorithm}[h!]
\small
\setlength{\textfloatsep}{10pt}
   \caption{Estimate $k$ for $u_A(x;k)$}
   \label{alg:k}
    \textbf{Input :} Dataset $X$, Explainers $E_1,\ldots,E_m$. \\ 
    \textbf{Output :} Baseline Distribution $U_\explanationSpace$. \\

    $X^{(N)} \leftarrow x_1, \ldots, x_N \sim X \quad \quad \backslash \backslash$ Draw $N$ samples from the dataset. \\

    \For{$i \in 1,\ldots, m$}{
    $\mu^{(N)}_i \leftarrow E_i(X^{(N)}) \quad \quad \quad \quad  \; \quad \quad \backslash \backslash$ Generate explanations from each explainer. 
    
     $\alpha_i \leftarrow \max || \mu^{(N)}_i ||$ $\quad \quad \quad \quad  \: \quad \quad \backslash \backslash$ Calculate the maximum $\ell_2$ norm for each set of explanations. \\ 
    }
   
    $k = \max \{ \alpha_i \}_{i=1}^{m}$ \\
    
    Return $k$
    
\end{algorithm}

\subsection{Wasserstein Globalness Algorithm} \label{app:algorithm}

The algorithm for \abbr{} is outlined in Alg. \ref{alg:wasg}.
\abbr{} is calculated for a given dataset, black-box model, and explainer.
The user must specify a distance metric $d_\explanationSpace$ and baseline distribution $U_\explanationSpace$ (see App. \ref{sec:recommended_spaces}).
The \abbr{} algorithm also assumes access to a Wasserstein distance solver, which we denote as function $d_W: \mathcal{P}(\mathcal{\explanationSpace}) \times \mathcal{P}(\mathcal{\explanationSpace}) \rightarrow \mathbb{R}$. Wasserstein distance approximation methods are discussed in App. \ref{sec:implementation}. Any Wasserstein distance solver can be used with the \abbr{} algorithm provided that the choice of $d_\explanationSpace$ is supported.

\begin{algorithm}[h!]
\small
\setlength{\textfloatsep}{10pt}
   \caption{Wasserstein Globalness Approximation}
   \label{alg:wasg}
    \textbf{Input :} Dataset $X$, Explainer $E$, Distance Metric $d_\explanationSpace$, Baseline Distribution $U_\explanationSpace$, Centering Function $\eta_\mathcal{E}$ (Optional), Wasserstein Distance Approximator $d_W$. \\ 
    \textbf{Output :} Wasserstein Globalness $\hat G_2$. \\

    $X^{(N)} \leftarrow x_1, \ldots, x_N \sim X \quad \quad \backslash \backslash$ Draw $N$ samples from the dataset.
    
    $\mu^{(N)} \leftarrow E(X^{(N)}) \quad \quad \quad \quad  \; \quad \quad \backslash \backslash$ Generate explanations. 
    
    $U_\explanationSpace^{(N)} \leftarrow u_1, \ldots, u_N \sim U_\explanationSpace \quad \quad \backslash \backslash$ Sample from baseline distribution.    \\

    \If{$\eta_\explanationSpace$ is not NONE}{$\mu^{(N)} \leftarrow \eta_\explanationSpace(\mu^{(N)})$ $\quad \quad \; \; \quad  \quad \backslash \backslash$ Center explanations, if specified.}

    $\hat G_2 \leftarrow d_W(\mu^{(N)}, U_\explanationSpace^{(N)}; d_\explanationSpace)$ $\quad \quad \; \; \; \backslash \backslash$ Calculate Wasserstein distance w.r.t. specified metric $d_\explanationSpace$.
    
    Return $\hat G_2$
\end{algorithm}

\section{Experiment Details} \label{app:experiment_details}

\subsection{Datasets and Models} \label{app:datasets}
We use 2 tabular datasets, 1 synthetic dataset, and 3 image datasets. All non-synthetic datasets are normalized to have a mean of zero and standard deviation of 1 prior to training.

\textbf{Divorce.} The Divorce dataset \citep{uci_datasets}, consists of a 54-question survey from 170 participants. Each survey asks participants to rank (scale 1-5) responses to questions related to activities and attitudes toward their partner. We use the dataset to predict divorce. We train a multi-layer perceptron (MLP) binary classifier with 2 hidden layers of width 50. The model achieves test accuracy of 98.5\%.

\textbf{NHANES.} The NHANES dataset \citep{millerPlanOperationHealth1973} is an annual survey conducted by the National Center for Health Statistics (NHCS), containing survey data for 3,329 patients. We train a Tabnet Neural Network model \citep{arik2021tabnet} to predict type-2 diabetes using a pre-selected set of 27 features \citep{dinhDatadrivenApproachPredicting2019}.

\textbf{MNIST.} MNIST \citep{lecunMNISTHandwrittenDigit2010} consists of 70000 grayscale images of dimension 28x28. There are 10 classes; each class corresponds to a handwritten digit against a black background. We train a neural network classifier with 2 convolution layers (200 channels, kernel size 6, stride 2) and a single fully-connected layer. The model achieves 98.3\% train accuracy and 98.5\% test accuracy.

\textbf{CIFAR10.} CIFAR10 \citep{krizhevskyLearningMultipleLayers2009} consists of 60,000 color images of dimension 32x32. Each of the 10 classes corresponds to a different object (airplane, automobile, bird, cat, deer, dog, frog, horse, ship, truck). We train a convolutional neural network using the Resnet-18 architecture \citep{resnet}. The model achieves 99.9\% train accuracy and 81.8\% test accuracy.

\textbf{ImageNet200.} The ImageNet200 dataset \citep{imagenet} consists of color images with 200 classes that we scale to 224x224. We use a standard Resnet-50 architecture with the last layer modified to predict 200 classes. We use the OpenOOD benchmark \citep{openood} in training the model.

\textbf{Jagged Boundary Dataset.} The Jagged Boundary dataset is a synthetic dataset consisting of two Gaussian clusters. The two clusters are split in half; each half corresponds to a different class. We then perturb the data: algorithm \ref{alg:jagged_boundary} describes the jagged-boundary data modification shown in Figure \ref{fig:jagged_boundary}. We retrain a multi-layer perceptron (MLP) binary classifier for each perturbation. The MLP contains 5 hidden layers of width 300.

\begin{algorithm}[h!]
\caption{Jagged-Boundary Synthetic Dataset} 
\label{alg:jagged_boundary}
    Let $X$ be a finite data sample.   
    
    Let $label(\cdot)$ give the label of each point in $X$.  
    
    Let $N_{x}$ be a neighborhood around point $x$.  
    
    \While{not all points visited}{
        \For {unvisited $x$}{
            \For {unvisited $x' \in N_{x}$}{
                $label(x') \leftarrow label(x)$
                $x'$ visited
            }
        }
    }
\end{algorithm}

\subsection{Explainers} \label{app:explainers}

\textbf{Saliency} \citep{Simonyan2014DeepIC} directly uses the gradient of class score with respect to the input image as a saliency map. The absolute value of the gradient is used as a feature importance score. We use the implementation provided in \citet{captum_ai}.

\textbf{Input x Gradients} \citep{shrikumar2016not} multiplies the the input sample with the gradient of the model with respect to the inputs.

\textbf{Integrated Gradients} \citep{sundararajanAxiomaticAttributionDeep2017} accumulates this gradient at several points along a path from some baseline input to the input to-be-explained. We set the baseline to be the zero vector for both image and tabular datasets, which corresponds to the mean of the training distribution. We use the implementation provided in \citet{captum_ai}.

\textbf{DeepLIFT} \citep{pmlr-v70-shrikumar17a} is a gradient-based method that computes importance scores for each feature by comparing the activation of each neuron to a reference activation value. We define the reference sample to be a vector of zeros. We use the implementation provided in \citet{captum_ai}.

\textbf{DeepSHAP} \citet{shap_lundberg} approximates Shapley values by extending \textit{DeepLIFT}. DeepSHAP requires specifying a reference sample or distribution; to reduce computational expense we calculate the mean of the training distribution and use this vector as the reference sample. We use the implementation provided in \citet{captum_ai}.

\textbf{SAGE} \citep{covert2020understanding} is a Shapley-based method that calculates a global feature attribution value. We use the implementation provided at \url{https://github.com/iancovert/sage}.

\textbf{Guided Backpropagation} \citep{guided_backprop} modifies the gradient approach by adding a signal that prevents the backwards flow of negative gradients during backpropagation. We use the implementation provided in \citet{captum_ai}.

\textbf{SmoothGrad, SmoothGBP, SmoothIG} \citep{smoothgrad} smooths the gradients with a Gaussian kernel to alleviate the high frequency fluctuations commonly seen in partial derivatives. Given a user defined $\sigma >0$, 
point of interest $x \in \inputspace$ and explanation method $\phi : \inputspace \rightarrow \explanationSpace$ the smoothed explainer may be written as
\begin{equation} \label{eq:smoothgrad_appendix}
    \phi_{\sigma \text{-smooth}}(x) = \mathbb{E}_{\delta \sim \mathcal{N}(0, \sigma^2 I)}[\phi(x + \delta)]
\end{equation}
We apply Eq. \eqref{eq:smoothgrad_appendix} to three different explainers. SmoothGrad applies smoothing to the saliency map. We also apply smoothing to Guided Backpropagation (SmoothGBP) and Integrated Gradients (SmoothIG). We use the implementation provided in \citet{captum_ai} to apply the explanation smoothing. In our experiments, we use 500 Gaussian samples to approximate the expected value in Eq. \eqref{eq:smoothgrad_appendix}.

\subsection{Figure \ref{fig:properties_example} Details} \label{app:properties_example}
We create a synthetic distribution of 1d explanations to illustrate the effects of isometric and non-isometric transformations on Wasserstein Globalness and competing metrics.

\textbf{Dataset.} We use a mixture of gaussians to simulate a multi-model distribution of explanations. We define the distribution $\nu \sim 0.5\mathcal{N}(\mu = 3, \sigma = 0.5) + 0.5 \mathcal{N}(\mu = -12, \sigma = 1.9)$

\textbf{Competing Metrics.} In practice, a user would only have access to sampled explanations from $\nu$. However, we directly calculate competing methods on the ground-truth distribution $\nu$ in order to avoid confounding effects of different discretization procedures. We estimate the the following comparison metrics using Monte Carlo integration.

We apply two f-divergences: Kullback-Leibler (KL) Divergence and Total Variation Distance. In both cases, we calculate the divergence of $\nu$ with the uniform distribution $u \sim \mathcal{U}[-30,30]$. First, we restate the f-divergence formulation between $\nu$ and $u$:
\begin{equation} 
D_f(\nu\|u) = \mathbb{E}_{X \sim u}\left[f\left(\frac{\nu(X)}{u(X)}\right)\right]
\end{equation}

KL-divergence uses $f(t) = t \log t$, resulting in the equation:
\begin{equation}
   D_{KL}(\nu,u) = \mathbb{E}_{X \sim u} \left[\log \frac{u(X)}{\nu(X)} \right]
\end{equation}

Total Variation distance uses $f(t) = \frac12 |t-1|$, resulting in the equation:
\begin{equation}
    D_{TV}(\nu,u) = \frac12 \int |u(X) - v(X)| \; \textrm{d}X
\end{equation}

We also evaluate Entropy on the explanation distribution $\nu$:
\begin{equation} 
H(X) = \mathbb{E}_{X \sim \nu}\left[-\log\left(\nu(X)\right)\right]
\end{equation}

\textbf{Wasserstein Globalness.} We directly apply Wasserstein Globalness to samples from $\nu$. We use the same reference distribution $u \sim \mathcal{U}[-30,30]$ for equal comparison, and use the same implementation settings as described in Section \ref{sec:experiments}.

\section{Additional Results} \label{app:results}
\subsection{Time Complexity Evaluation} \label{app:time}
\begin{figure}[t]
  \centering
  \includegraphics[width=0.8\linewidth]{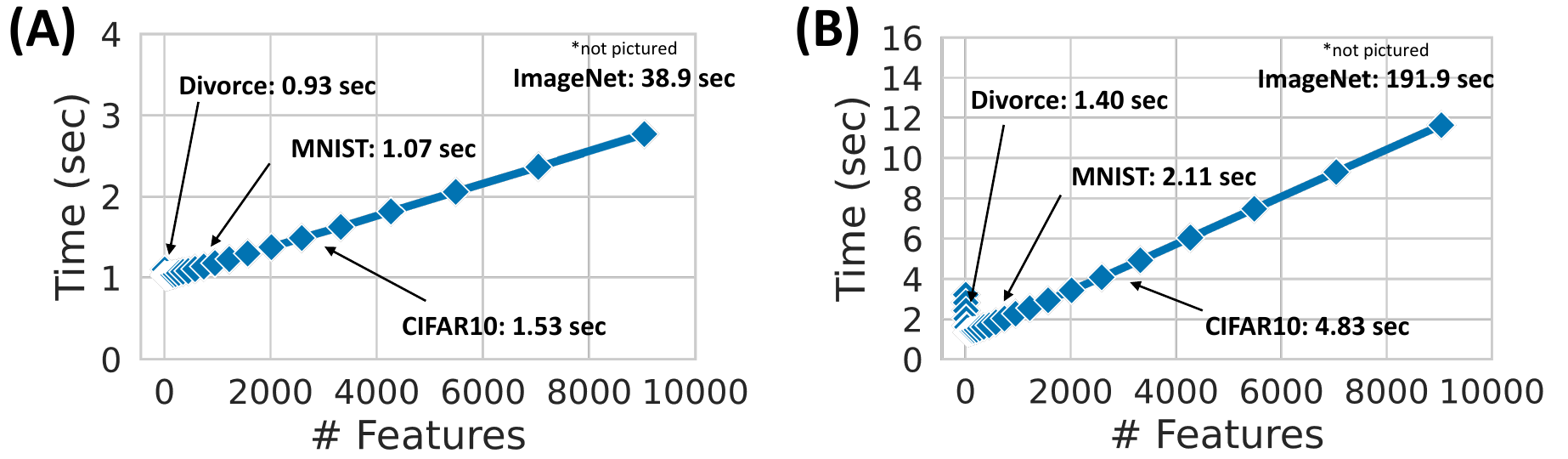}  
  \caption{Wall clock time in seconds (blue line) for Wasserstein Globalness calculated for $10^3$ explanation samples and $10^4$ uniform samples. The results in \textbf{(A)} use Sliced Wasserstein approximated using Monte Carlo sampling. The results in \textbf{(B)} use entropic regularization. We observe that clock time is independent of black-box model architecture.}
  \label{fig:time}
\end{figure}

In Figure \ref{fig:time} we show the time complexity measurements for calculating Wasserstein Globalness using two different optimal transport solvers. Both algorithms are implemented using the Python Optimal Transport package \citep{pot}, which is publicly available at \url{https://pythonot.github.io}.

\subsection{Ablation Study} \label{app:ablation}
We conduct an ablation study to evaluate \abbr{} when using different distance metrics and baseline distributions. In Figure \ref{fig:wg_ablation_synthetic} we define a synthetic set of explanations with varying levels of globalness ($\lambda$), and test whether \abbr{} can recover the relative globalness.

We first evaluate the effect of increasing $k$ in the recommended baseline distribution $u_A(x;k)\propto \mathds{1}_{||x||_2 \leq k}(x)$, (left-most column in the Figure). We evaluate \abbr{} for $k=0.66$, which is the recommended $k$, as described in Section \ref{sec:properties}. As expected, higher values of $\lambda$ correspond with lower \abbr{}. As we increase $k$, this increases the “spread” of the baseline distribution, which represents the minimally-global explanation (\ref{prop:local}). As expected, we observe that the absolute WG scores increase on average, but the relative rankings between the different explainers remains consistent.

In Figure \ref{fig:wg_ablation_divorce}, we replicated the above experiment on the Divorce dataset, using the SmoothGrad \citep{smoothgrad} explainer. Here, we vary the explainer’s globalness by decreasing the smoothing parameter $\sigma$, which generally corresponds to less global explanations (as noted in Section \ref{sec:experiment_auc}). The recommended $k$ (Section \ref{sec:properties}) is 0.83; we also evaluate $k=1$ and $k=2$. The results were consistent with the synthetic dataset, demonstrating that WG effectively quantifies globalness under different configurations.

\begin{figure*}[t]
  \centering
  \includegraphics[width=0.8\textwidth]{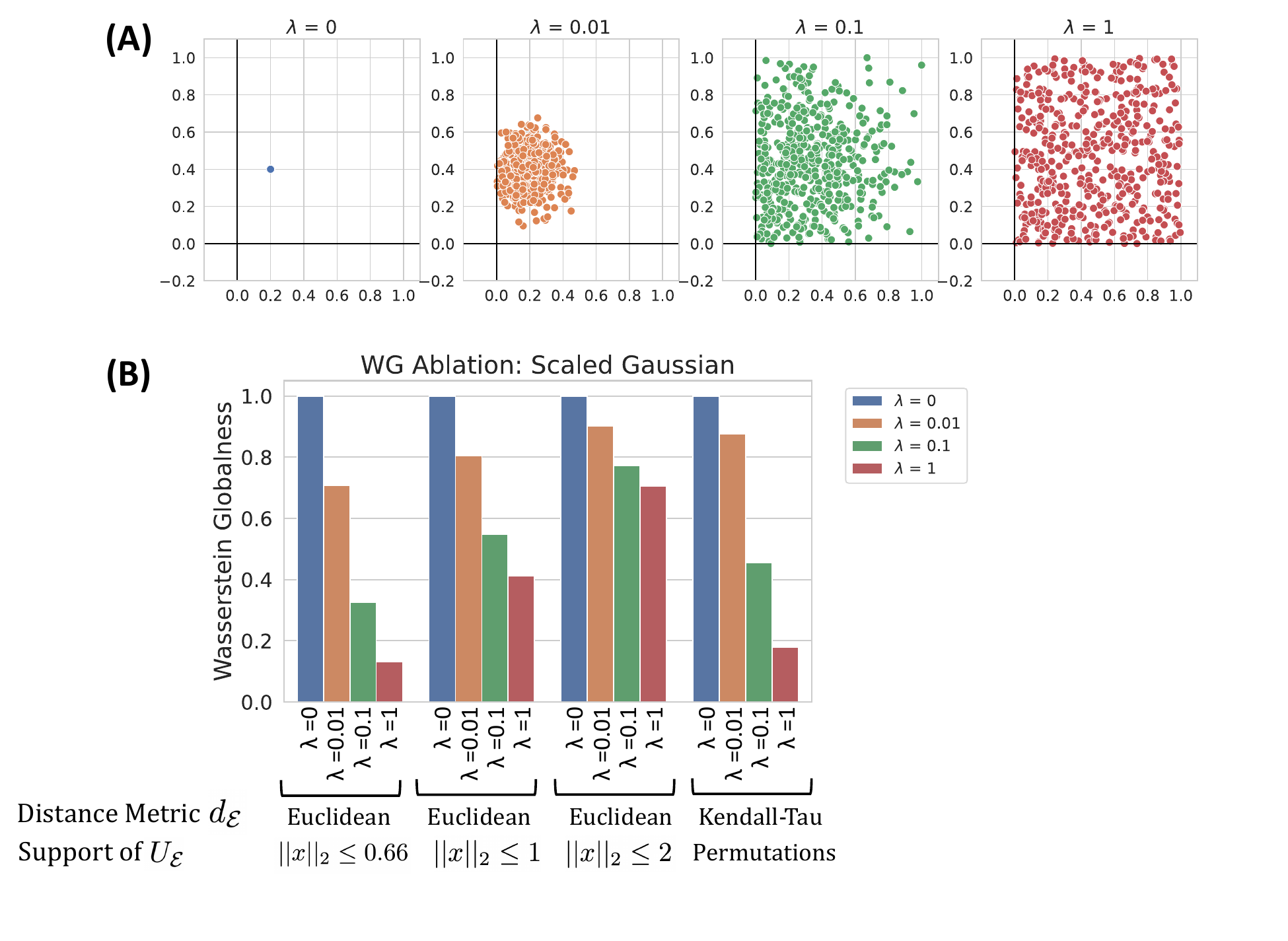}
  \vspace{-5mm}
  \caption{Ablation test on different distance metrics $d_\mathcal{E}$ and baseline distributions $U_\mathcal{E}$ for a synthetic dataset. \textbf{(A)} We construct a synthetic distribution of explanations $E(x) \sim \mathcal{N}(\begin{bmatrix}0.2  \\ 0.4  \end{bmatrix},\begin{bmatrix}\lambda & 0  \\ 0 & \lambda  \end{bmatrix})$, truncated to be within $[0,1]^2$, with decreasing globalness (increasing $\lambda$, respectively). \textbf{(B)} We calculate empirical WG using different distance metrics and baseline distributions. In the first 3 columns, we use the Euclidean distance and uniform distribution $u_A(x;k)\propto \mathds{1}_{||x||_2 \leq k}(x)$ with varying $k$. In the last column, we convert explanations to a ranking, then use the Kendall-Tau distance and the uniform distribution over permutations.}
  \label{fig:wg_ablation_synthetic}
  \vspace{-2mm}
\end{figure*}

\begin{figure*}[t]
  \centering
  \includegraphics[width=0.68\textwidth]{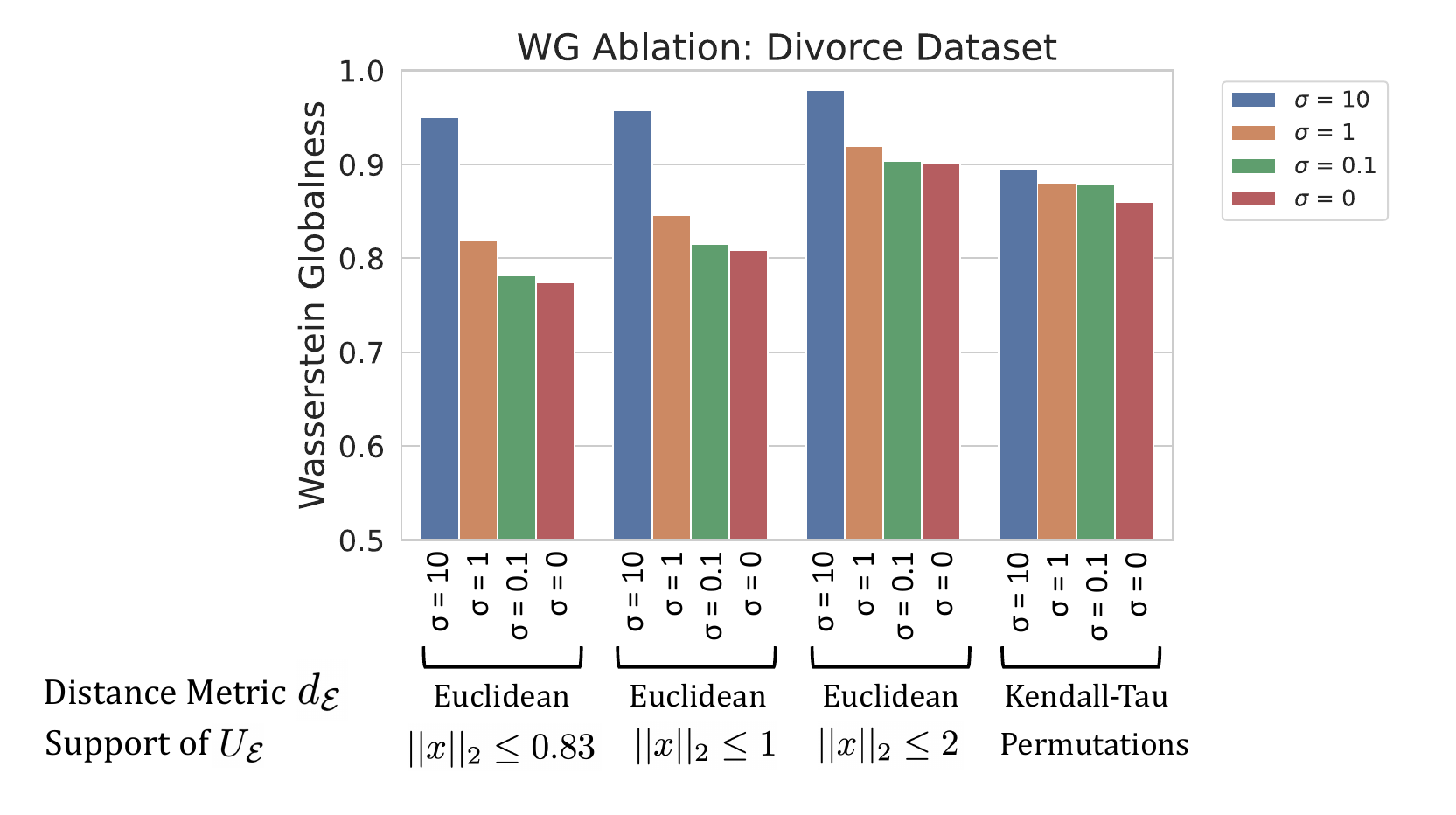}
  \vspace{-2mm}
  \caption{Ablation test on different distance metrics $d_\mathcal{E}$ and baseline distributions $U_\mathcal{E}$ for the Divorce dataset, using the SmoothGrad explainer \citep{smoothgrad}. We decrease the smoothing parameter $\sigma$ for Smoothgrad (Eq. \ref{eq:smoothgrad}), which generally corresponds to decreasing globalness (see Section \ref{sec:experiment_auc}). We calculate empirical WG using different distance metrics and baseline distributions. In the first 3 columns, we use the Euclidean distance and uniform distribution $u_A(x;k)\propto \mathds{1}_{||x||_2 \leq k}(x)$ with varying $k$. In the last column, we convert explanations to a ranking, then use the Kendall-Tau distance and the uniform distribution over permutations.}
  \label{fig:wg_ablation_divorce}
  \vspace{-2mm}
\end{figure*}

\subsection{Results on Imagenet1000} \label{app:imagenet1k}
In Figure \ref{fig:wg_imagenet1k} we extend the results of Section \ref{sec:experiment_auc} and \ref{sec:experiment_clustering} to the ImageNet1000 dataset \citep{imagenet}. The dataset description matches that of ImageNet200 (App. \ref{app:datasets}), but is extended to the full 1000 classes.
Due to the computational constraints, we limit our analysis on the more efficient attribution methods, SG, IG, and GBP.
Note that the main computational bottleneck of calculating \abbr{} stems from its need to sample the distribution of explanations corresponding to the given dataset, therefore it is dependent on the efficiency of the particular explainer.
\begin{figure*}[t]
  \centering
  \includegraphics[width=0.86\textwidth]{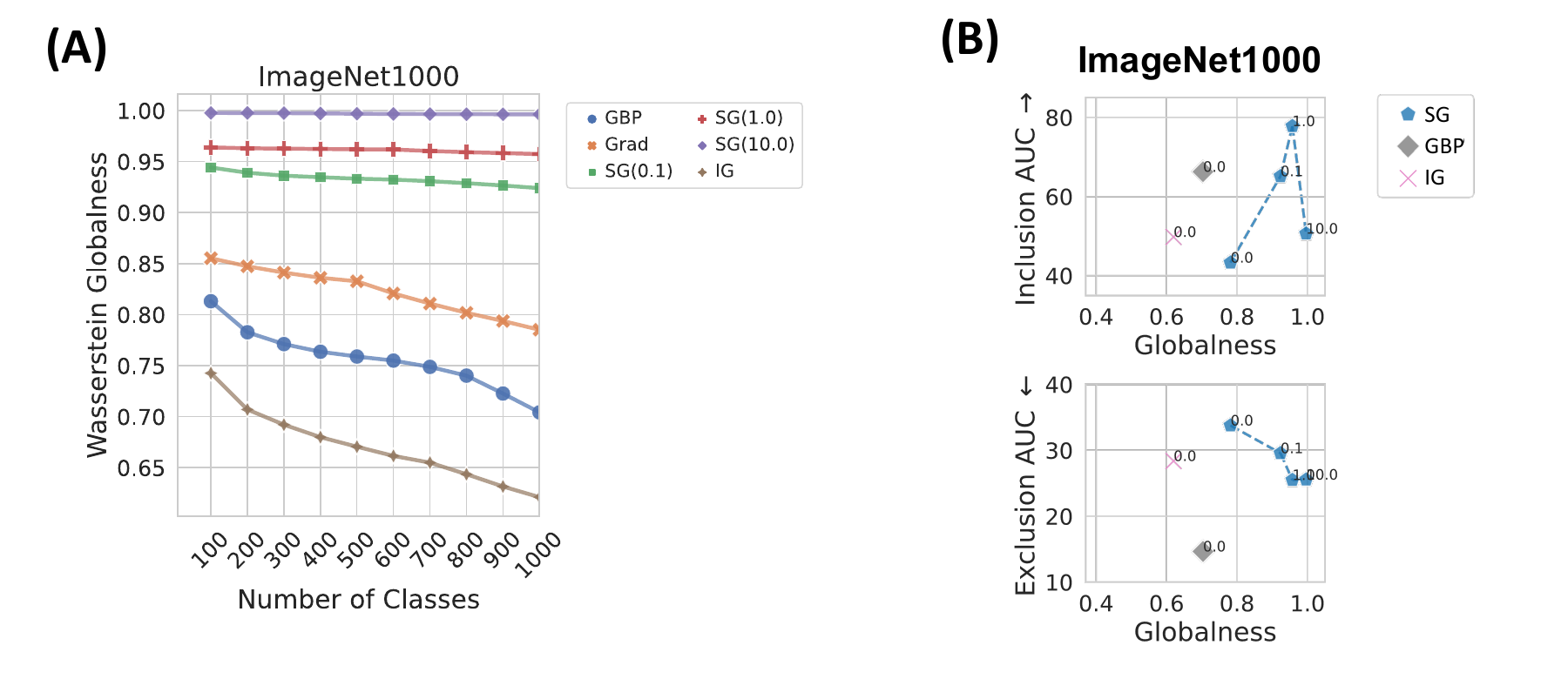}
  \vspace{-4mm}
  \caption{\textbf{(A)} Extension of the experiment in Section \ref{sec:experiment_clustering} to the ImageNet1000 dataset. We evaluate whether WG can capture an increasing diversity of explanations by taking samples from different label classes. As expected, the WG score decreases as more classes are included, indicating decreased globalness. \textbf{(B)} Extension of the experiment in Section \ref{sec:experiment_auc} to the ImageNet1000 dataset. We compare \emph{faithfulness} (IncAUC and ExAUC) and \emph{globalness}. The results are consistent with Figure \ref{fig:auc}.}
  \label{fig:wg_imagenet1k}
\end{figure*}

\subsection{Isometry Examples} \label{app:additional_isometry_figure}

\textbf{Feature Selection.} 
Consider a feature selection explainer for two features (Figure \ref{fig:d_importance}A).
Both distributions indicate two explanations selected with equal probability. In the left plot, the explanations have a Hamming distance of 1, while in the right plot, the distance is 2. Since the explanations are, on average, closer together for the left explainer, the distribution should have higher globalness. 
In contrast, when explanations change but pairwise distances remain constant (i.e. isometric transformations), the globalness should not change. For example, reflecting the explanations in Figure \ref{fig:d_importance}B maintains pairwise distances between explanations. This intuition applies to feature attribution explainers as well (App. \ref{app:additional_isometry_figure} Figure \ref{fig:addnl_isometry}).

\begin{figure*}[t]
  \centering
  \includegraphics[width=0.85\textwidth]{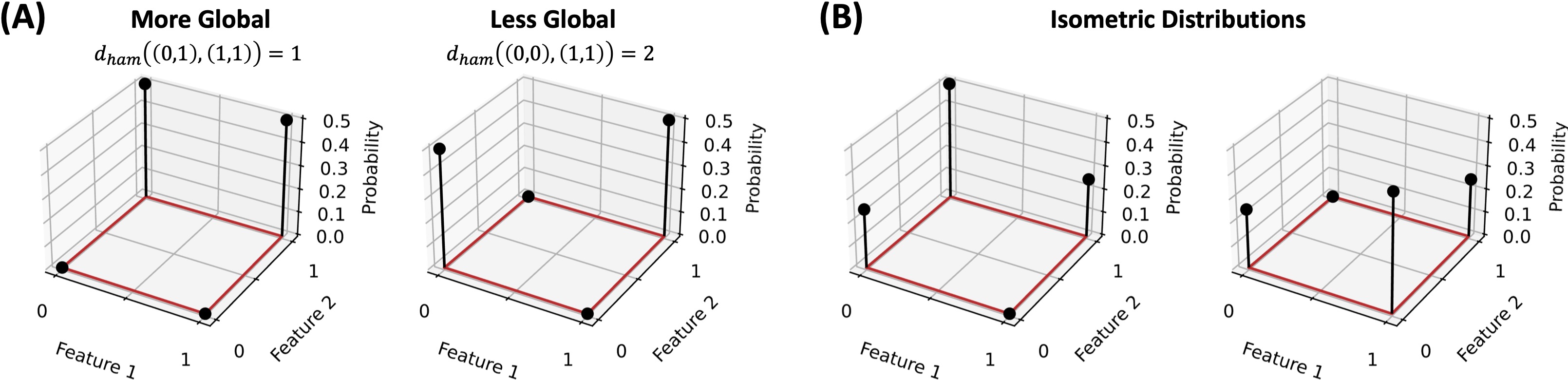}
  \vspace{-2mm}
  \caption{Toy example for a feature selection explainer. The distribution of explanations ($\mathcal{E} = \{0,1\}^2$) is shown, with probability on the vertical axis. \textbf{(A)} The left distribution is more global than the right because the average distance between explanations is lower in the metric space $(\explanationSpace, d_{\explanationSpace})$. \textbf{(B)} In contrast, distance-preserving transformations do not change the average distance between explanations. The globalness should remain the same since the distances are unchanged.}
  \label{fig:d_importance}
  \vspace{-2mm}
\end{figure*}

\textbf{Feature Attribution.} In Figure \ref{fig:addnl_isometry} we extend the example in Figure \ref{fig:example1} to visualize different isometries on 2d continuous distribution of explanations. We generate SmoothGrad attributions for classifier trained on CIFAR10 and project down to 2 dimensions for visualization. The values $\hat G_2$ shown at the top of each column show that Wasserstein Globalness is invariant to the isometric transformations shown.

%%%%%%%%%%%%%%%%%%%%%%%%%%%%%%%%%%%%%%%%%%%%%
  \begin{figure}[t]
  \centering
  \includegraphics[width=0.6\linewidth]{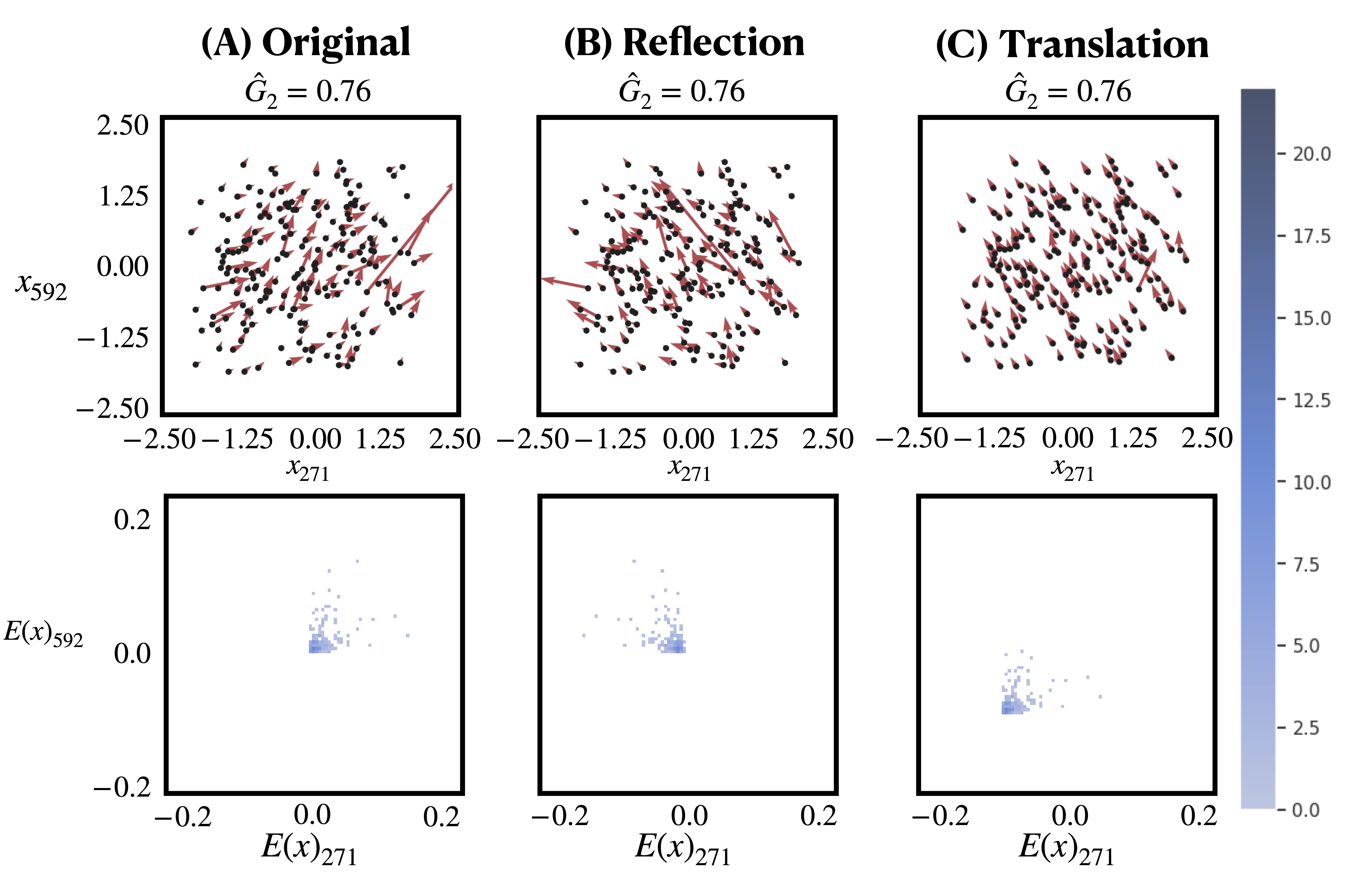}
  % \vspace{-7mm}
  \caption{Example of isometries applied to SmoothGrad  attributions. Explanations are projected to two features. 
  \textbf{(Top)} Points are visualized as black dots from which corresponding attributions, shown as red vectors, originate. 
  \textbf{(Bottom)} A 2-d histogram of the respective explanations is shown. 
  \textbf{(A)} The \emph{original} explanations are shown.
  \textbf{(B)} Explanations are \emph{reflected} across the vertical axis. 
  \textbf{(C)} Explanations are \emph{translated} by the vector $\mathbf{v} = (-1e\text{-}1, 1e\text{-}1)$. 
  We observe that the isometries in \textbf{(B)} and \textbf{(C)} do not change the globalness $\hat G_2$.}
  \label{fig:addnl_isometry}
  % \vspace{-2mm}
\end{figure}
%%%%%%%%%%%%%%%%%%%%%%%%%%%%%%%%%%%%%%%%%%%%%

\subsection{Qualitative Examples for Image Datasets} \label{app:qualitative_images}
In Figures \ref{fig:examples_mnist}, \ref{fig:examples_cifar10}, and \ref{fig:examples_imagenet} we present samples from MNIST, CIFAR10, and ImageNet200 to visualize the effects of applying explainers with higher globalness.
We observe that the WG score for the explainer increases as the smoothing parameter $\sigma$ increases. The heatmaps for $\sigma = 0$ exhibits explanations that are more distinctive for each sample, as compared to the heatmaps for $\sigma = 10$.

\begin{figure}[t]
  \centering
  \includegraphics[width=0.8\linewidth]{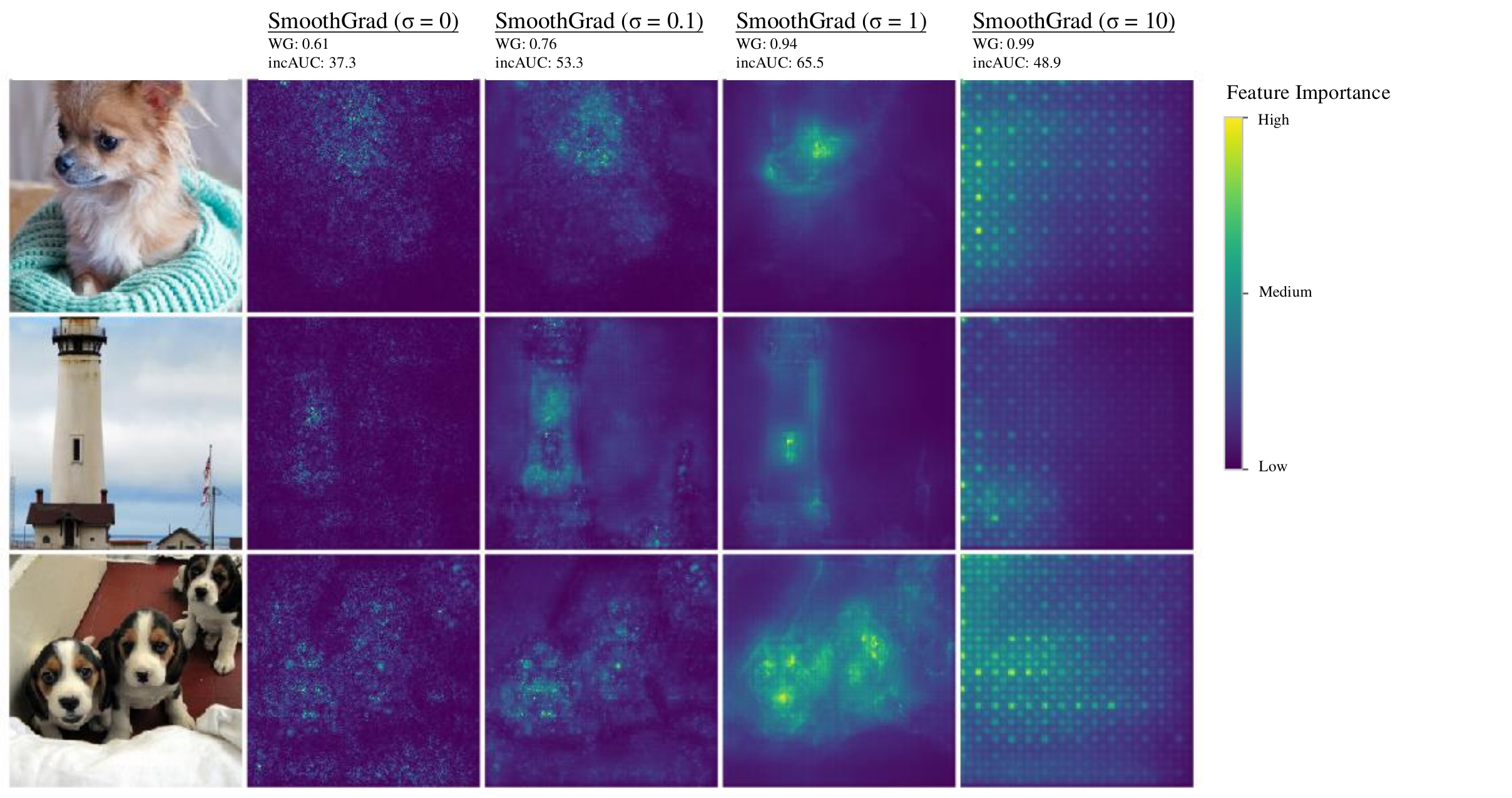}
  % \vspace{-3mm}
  % \setlength{\abovecaptionskip}{-20pt}
  \caption{Heatmap of feature attribution values for samples drawn from the MNIST dataset. }
  \label{fig:examples_imagenet}
  % \vspace{-3mm}
\end{figure}

\begin{figure}[t]
  \centering
  \includegraphics[width=0.8\linewidth]{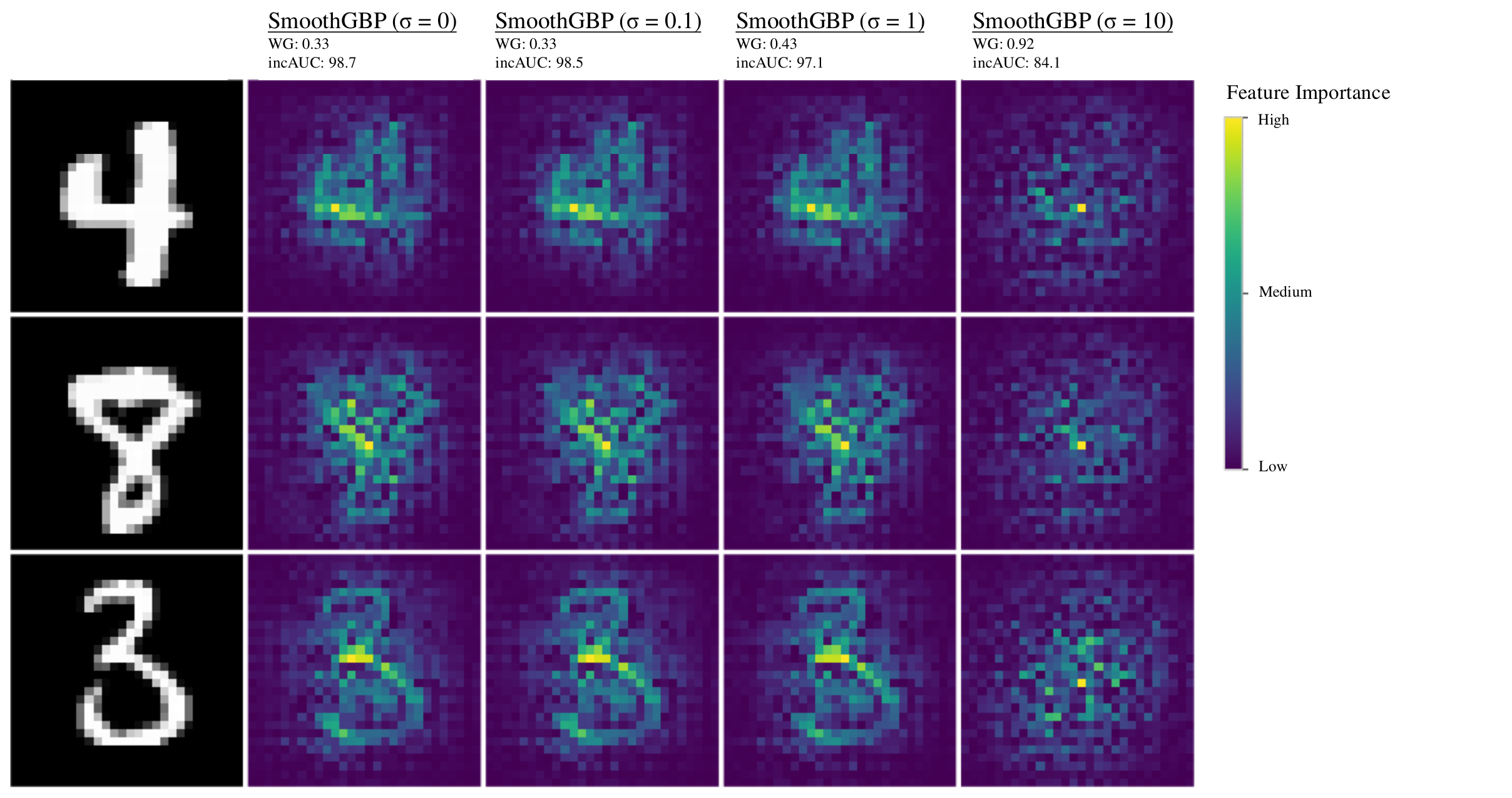}
  % \vspace{-7mm}
  \caption{Heatmap of feature attribution values for samples drawn from the MNIST dataset. }
  \label{fig:examples_mnist}
  % \vspace{-2mm}
\end{figure}

\begin{figure}[t]
  \centering
  \includegraphics[width=0.8\linewidth]{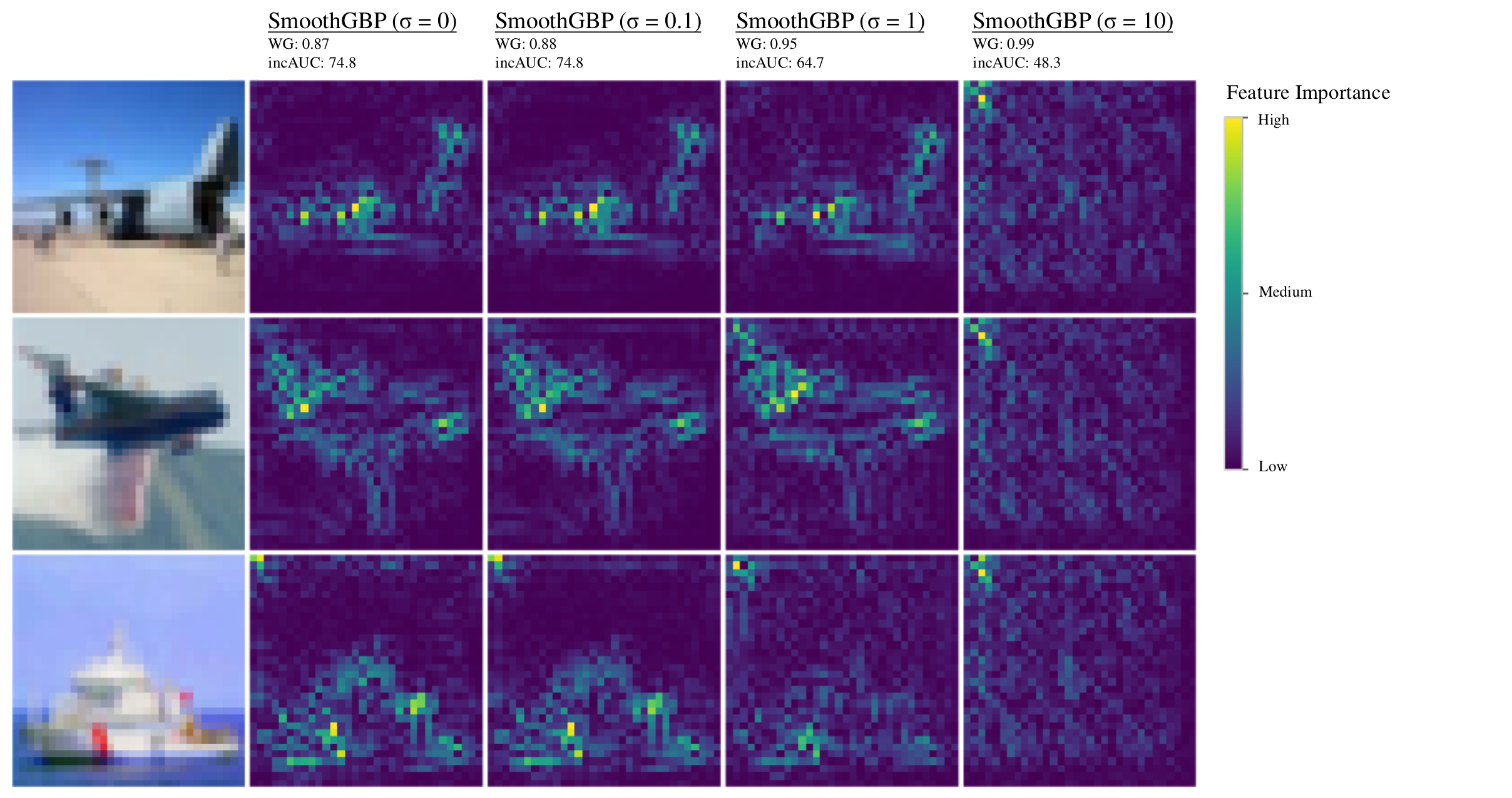}
  % \vspace{-7mm}
  \caption{Heatmap of feature attribution values for samples drawn from the CIFAR10 dataset.}
  \label{fig:examples_cifar10}
  % \vspace{-2mm}
\end{figure}

\subsection{Error Bars for Figure \ref{fig:auc}} \label{app:auc_sd}
In Table \ref{tab:auc_sd} we provide the bootstrap standard error results (50 iterations) corresponding to Figure \ref{fig:auc}.
\begin{table}[ht]
    \centering
    \small
    \begin{tabular}{cccccc}
    \toprule
        \textbf{Dataset} & \textbf{Method} & \textbf{$\sigma$}  & \textbf{Wasserstein Globalness SE} & \textbf{IncAUC SE} & \textbf{ExAUC SE} \\ \midrule
        Divorce & DS &   & 0.007 & 1.142 & 2.944 \\ 
        Divorce & GBP &  & 0.008 & 0.585 & 4.196 \\ 
        Divorce & GRAD &   & 0.008 & 1.41 & 2.903 \\ 
        Divorce & IG &   & 0.011 & 1.146 & 2.927 \\ 
        Divorce & SAGE &   & 0.000 & 1.844 & 2.657 \\ 
        Divorce & SBP & 0.1  & 0.008 & 0.624 & 4.187 \\ 
        Divorce & SBP & 1  & 0.009 & 0.531 & 4.221 \\ 
        Divorce & SBP & 10  & 0.01 & 0.395 & 3.852 \\ 
        Divorce & SG & 0.1  & 0.01 & 1.319 & 2.936 \\ 
        Divorce & SG & 1  & 0.009 & 1.184 & 2.938 \\ 
        Divorce & SG & 10  & 0.01 & 1.515 & 2.637 \\ 
        \midrule
        NHANES & DS &   & 0.014 & 0.062 & 0.027 \\ 
        NHANES & GBP &   & 0.014 & 0.091 & 0.021 \\ 
        NHANES & GRAD &   & 0.011 & 0.062 & 0.027 \\ 
        NHANES & IG &   & 0.003 & 0.106 & 0.017 \\ 
        NHANES & SAGE &   & 0.000 & 0.200 & 0.016 \\ 
        NHANES & SBP & 0.1  & 0.007 & 0.097 & 0.045 \\ 
        NHANES & SBP & 1 & 0.002 & 0.025 & 0.004 \\ 
        NHANES & SBP & 10  & 0.054 & 0.076 & 0.012 \\ 
        NHANES & SG & 0.1  & 0.009 & 0.148 & 0.035 \\ 
        NHANES & SG & 1  & 0.004 & 0.070 & 0.000 \\ 
        NHANES & SG & 10  & 0.004 & 0.020 & 0.009 \\ 
        \midrule
        MNIST & DS &   & 0.008 & 0.202 & 2.696 \\ 
        MNIST & GBP &   & 0.007 & 0.204 & 2.304 \\ 
        MNIST & GRAD &   & 0.006 & 0.756 & 1.854 \\ 
        MNIST & IG &   & 0.007 & 0.18 & 2.745 \\ 
        MNIST & SAGE &   & 0.000 & 1.386 & 3.9 \\ 
        MNIST & SBP & 0.1  & 0.009 & 0.309 & 2.195 \\ 
        MNIST & SBP & 1 & 0.007 & 0.769 & 1.97 \\ 
        MNIST & SBP & 10  & 0.007 & 0.905 & 2.096 \\ 
        MNIST & SG & 0.1  & 0.005 & 0.58 & 1.96 \\ 
        MNIST & SG & 1  & 0.007 & 0.497 & 2.142 \\ 
        MNIST & SG & 10  & 0.007 & 0.711 & 2.222 \\ 
        \midrule
        CIFAR10 & DS &  & 0.005 & 2.994 & 2.624 \\ 
        CIFAR10 & GBP & & 0.005 & 2.67 & 2.687 \\ 
        CIFAR10 & GRAD &  & 0.004 & 3.186 & 2.785 \\ 
        CIFAR10 & IG &  & 0.004 & 2.554 & 2.28 \\ 
        CIFAR10 & SAGE &   & 0.000 & 3.485 & 2.572 \\ 
        CIFAR10 & SBP & 0.1  & 0.004 & 2.751 & 2.173 \\ 
        CIFAR10 & SBP & 1  & 0.006 & 2.054 & 2.359 \\ 
        CIFAR10 & SBP & 10  & 0.006 & 2.09 & 2.228 \\ 
        CIFAR10 & SG & 0.1  & 0.005 & 2.415 & 2.285 \\ 
        CIFAR10 & SG & 1  & 0.006 & 2.084 & 2.318 \\ 
        CIFAR10 & SG & 10  & 0.006 & 2.179 & 2.069 \\ 
        \bottomrule
    \end{tabular}
    \caption{Bootstrap Standard Error (SE) results for Figure \ref{fig:auc} (50 iterations).}
    \label{tab:auc_sd}
\end{table}

\end{document}